\documentclass{applemlr}
\usepackage{amsmath,amssymb,amsfonts,amsthm,bm,mathtools}
\usepackage{microtype}
\newtheorem{proposition}{Proposition}

\newtheorem{lemma}{Lemma}
\newtheorem{remark}{Remark}
\usepackage{listings}
\usepackage{enumerate}
\usepackage{algorithm}
\usepackage{algpseudocode}
\usepackage{amsfonts}
\usepackage{amsthm}
\usepackage{newtxtt}
\usepackage{cleveref}
\usepackage{diagbox}
\usepackage{colortbl}
\usepackage{amssymb}
\usepackage{xspace}
\usepackage{wrapfig}
\usepackage{adjustbox}
\usepackage{tabularx}
\usepackage{booktabs}
\usepackage{mathtools}
\usepackage{tikz}
\usepackage{enumitem}
\usepackage{silence}
\usepackage{dsfont}
\usepackage[table]{xcolor}
\usepackage[dvipsnames]{xcolor}
\usepackage{multirow}
\usepackage{makecell}
\usepackage{xfakebold}

\usepackage{amsmath,amsfonts,bm}



\newcommand{\Cat}{\mathrm{Categorical}}


\def\Figref#1{Figure~\ref{#1}}





\def\eqref#1{equation~\ref{#1}}









\def\1{\bm{1}}

\def\veps{{\boldsymbol{\epsilon}}}







\def\vzero{{\bm{0}}}

\def\vmu{{\bm{\mu}}}

\def\vm{{\bm{m}}}

\def\vw{{\bm{w}}}
\def\vx{{\bm{x}}}

\def\vz{{\bm{z}}}



\def\mI{{\bm{I}}}

\def\mQ{{\bm{Q}}}

\DeclareMathAlphabet{\mathsfit}{\encodingdefault}{\sfdefault}{m}{sl}
\SetMathAlphabet{\mathsfit}{bold}{\encodingdefault}{\sfdefault}{bx}{n}


\def\gL{{\mathcal{L}}}










\newcommand{\E}{\mathbb{E}}

\newcommand{\R}{\mathbb{R}}

\newcommand{\KL}{D_{\mathrm{KL}}}



\DeclareMathOperator*{\argmax}{arg\,max}

\definecolor{textgray}{HTML}{6E6E73}
\usetikzlibrary{positioning, calc}
\usetikzlibrary{decorations.pathmorphing}

\makeatletter
\patchcmd{\wrong@fontshape}{\@gobbletwo}{}{}{}
\makeatother
\WarningFilter{latexfont}{Font shape}
\WarningFilter{latexfont}{Some font}

\numberwithin{equation}{section}
\setcounter{tocdepth}{2} %
\tcbuselibrary{minted}
\usemintedstyle{colorful}

\setminted[python]{
  linenos,
  breaklines,
  fontsize=\footnotesize,
  xleftmargin=2em
}

\makeatletter
\AtBeginDocument{
  \urlstyle{sf}
  
}
\makeatother

\definecolor{light}{RGB}{125, 125, 125}
\crefname{tcb@cnt@pbox}{code}{code}
\Crefname{tcb@cnt@pbox}{Code}{Code}
\crefname{assumption}{assumption}{assumption}
\Crefname{assumption}{Assumption}{Assumptions}

\newtcolorbox[auto counter]{pbox}[2][]{
  colback=white,
  title=Code~\thetcbcounter: #2,
  #1,fonttitle=\sffamily,
  fontupper=\sffamily,
  arc=2pt,
  colframe=bgcolor,
  coltitle=fgcolor,
  colbacktitle=bgcolor,
  toptitle=0.25cm,
  bottomtitle=0.125cm
}

\makeatletter
\newcommand\applefootnote[1]{%
  \begingroup
  \renewcommand\thefootnote{}%
  \renewcommand\@makefntext[1]{\noindent##1}%
  \footnote{#1}%
  \addtocounter{footnote}{-1}%
  \endgroup
}
\makeatother

\definecolor{cverbbg}{gray}{0.90}

\usepackage{tcolorbox}

\definecolor{codegreen}{rgb}{0,0.3,0.6}
\definecolor{codegray}{rgb}{0.5,0.5,0.5}
\definecolor{codepurple}{rgb}{0.58,0,0.82}
\definecolor{backcolour}{rgb}{0.95,0.95,0.92}
\definecolor{orange}{rgb}{1,0.5,0}
\definecolor{mydarkblue}{rgb}{0,0.08,0.45}

\lstdefinestyle{mystyle}{
    backgroundcolor=\color{backcolour},   
    commentstyle=\color{codegreen},
    keywordstyle=\color{magenta},
    numberstyle=\tiny\color{codegray},
    stringstyle=\color{codepurple},
    basicstyle=\ttfamily\footnotesize,
    breakatwhitespace=false,         
    breaklines=true,                 
    captionpos=b,                    
    keepspaces=true,                 
    numbers=left,                    
    numbersep=5pt,                  
    showspaces=false,                
    showstringspaces=false,
    showtabs=false,                  
    tabsize=2
}
\lstset{style=mystyle}

\title{Continuously Augmented Discrete Diffusion model for Categorical Generative Modeling}
\author{Huangjie Zheng}
\author[\dagger]{Shansan Gong}
\author{Ruixiang Zhang}
\author{Tianrong Chen}
\author{Jiatao Gu}
\author[\dagger]{Mingyuan Zhou}
\author{Navdeep Jaitly}
\author{Yizhe Zhang}

\affiliation{Apple}

\abstract{
Standard discrete diffusion models treat all unobserved states identically by mapping them to an absorbing \texttt{[MASK]} token. This creates an ``information void'' where semantic information that could be inferred from unmasked tokens is lost between denoising steps. 
We introduce \emph{Continuously Augmented Discrete Diffusion} (CADD), a framework that augments the discrete state space with a paired diffusion in a continuous latent space. This yields graded, gradually corrupted states in which masked tokens are represented by noisy yet informative latent vectors rather than collapsed ``information voids''. At each reverse step, CADD may leverage the continuous latent as a semantic hint to guide discrete denoising. The design is clean and compatible with existing discrete diffusion training. At sampling time, the strength and choice of estimator for the continuous latent vector enables a controlled trade-off between mode-coverage (generating diverse outputs) and mode-seeking (generating contextually precise outputs) behaviors. Empirically, we demonstrate CADD improves generative quality over mask-based diffusion across text generation, image synthesis, and code modeling, with consistent gains on both qualitative and quantitative metrics against strong discrete baselines.}

\metadata[Correspondence]{\sffamily Huangjie Zheng: \url{huangjie_zheng@apple.com}; Yizhe Zhang: \url{yizhe_zhang@apple.com}}
\date{\sffamily\today}

\begin{document}

\maketitle

\applefootnote{ \textcolor{textgray}{\sffamily $\dagger$ Work done when Shansan Gong and Mingyuan Zhou were at Apple. }}

\begin{figure}[h]
    \centering
    \includegraphics[width=\linewidth]{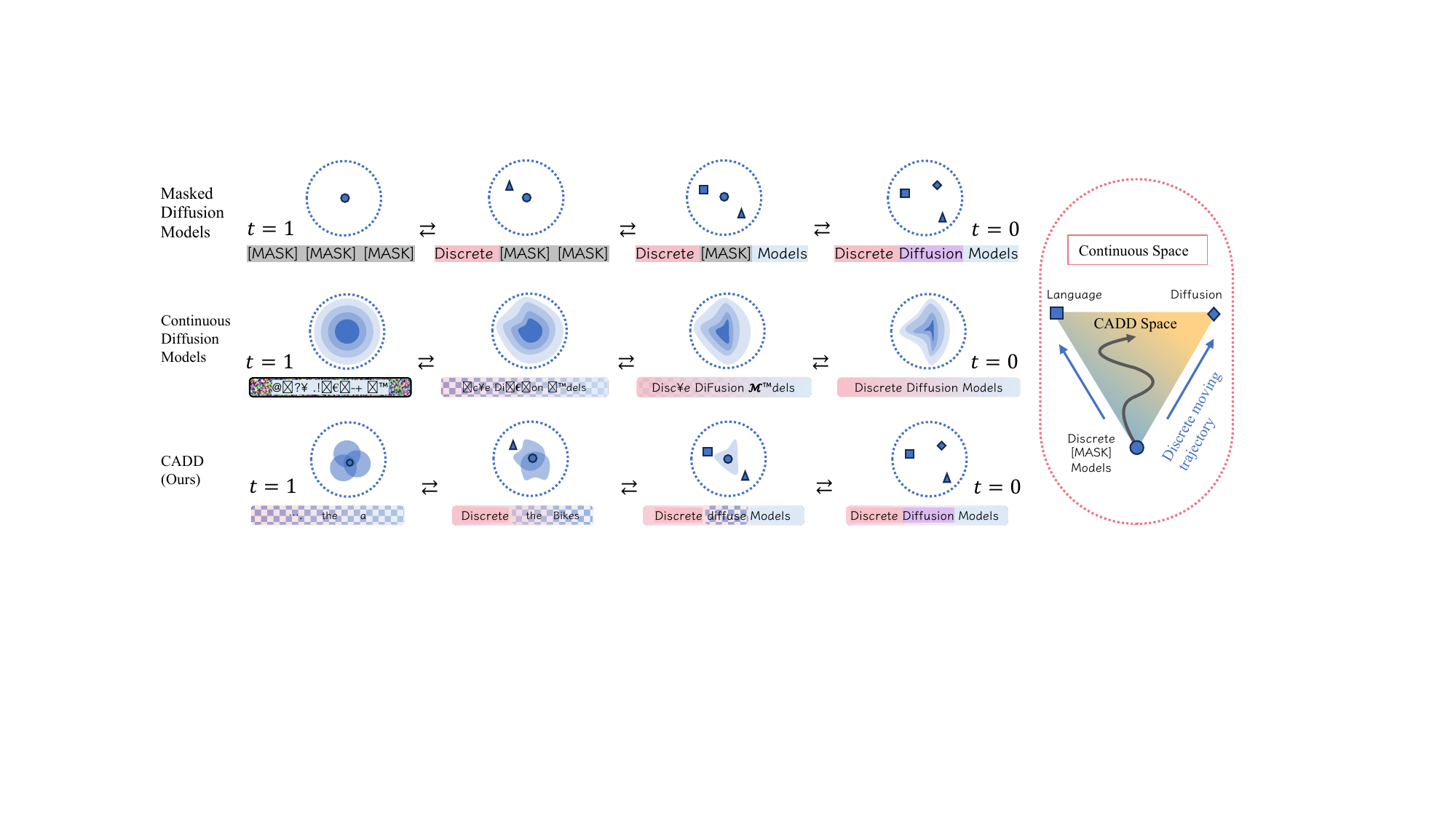}
    \caption{(\emph{Best view in color}) \textbf{Comparison of diffusion models across modeling spaces.} Masked diffusion uses \texttt{[MASK]} as noise and follows a single mask-to-token path, jumping from an absorbing state to token predictions. Continuous (Gaussian) diffusion evolves in the full embedding space, but intermediate latents often do not decode to valid tokens until the final step because the search space is large. CADD combines the \textit{stability} of masked diffusion with the \textit{flexibility} of continuous diffusion: discrete tokens anchor a context-consistent subspace, while the paired continuous latent allows smooth transitions among plausible token candidates, improving decoding at masked positions.}
    \label{fig:teaser}
\end{figure}

\section{Introduction}

Diffusion models have significantly advanced generative modeling tasks~\citep{dickstein2015diffusion,ho2020ddpm,song2021scorebased,dhariwal2021image,Karras2022edm}, particularly in image synthesis~\citep{saharia2022image, esser2024image, polyak2024moviegen, zheng2024learning,brooks2024video}. Recently, with rapid progress in discrete diffusion models~\citep{austin2021d3pm,hoogeboom2021multinomial,lou2024sedd}, diffusion models have become a powerful tool for discrete categorical data domains, such as text generation and code generation~\citep{gat2024discrete,gong2022diffuseq,gong2025diffucoder}.

Early work on Continuous Diffusion Models (CDMs) for categorical data maps tokens into a continuous space, applies Gaussian diffusion to the representations, and then rounds back to discrete symbols~\citep{li2022diffusion,dieleman2022continuous,han2022ssd,zhang2023planner,gulrajani2023likelihoodbased}. This route preserves smooth semantic signals and enables the use of established score-based methods. In parallel, Masked Diffusion Models (MDMs) have recently shown strong results for categorical data~\citep{shi2024md4,sahoo2024simple,nie2025llmdiff}: instead of adding noise in an embedding space, MDMs progressively mask tokens over time and learn to unmask them, yielding clear training signals via token-level cross-entropy.

Despite their respective successes, both approaches have  limitations, which are illustrated in~\Figref{fig:teaser}. (i) MDMs suffer from information loss due to their use of absorbing \texttt{[MASK]} state~\citep{chao2025mdmprime,wang2025remasking}. This design collapses all unobserved possibilities into one symbol, erasing information about how similar a corrupted position is to the original token, thus creating an ``information void''. This reduces the information available for resolving ambiguity and maintaining global semantic coherence. For example, as shown on the right of the figure, if a masked token could plausibly be ``Language'' or ``Diffusion'', the \texttt{[MASK]} representation offers no semantic clue to favor either option, forcing the model to make a hard choice without graded guidance.
(ii) While CDMs can represent semantic proximity, they face a different challenge known as ``over-smoothing''. Because the denoising process occurs entirely in continuous embedding space with discretization to tokens only at the end~\citep{gao2022empowering}, their continuous denoising objective can over-smooth token identities, making it difficult to make precise predictions without localized context—a problem known as ``rounding error'' \citep{li2022diffusion}.

To address these challenges, we propose \textbf{Continuously Augmented Discrete Diffusion (CADD)}, which combines the strengths of both CDMs and MDMs. CADD keeps the discrete masking process but augments a parallel continuous diffusion in continuous semantic embedding space. This means masked positions retain semantic information through noisy but informative latent vectors instead of becoming collapsed information voids. In the reverse process, the model uses the continuous latent as a soft semantic hint to guide token denoising at each step, while the discrete context constrains the latent dynamics locally. Returning to \Figref{fig:teaser}, the continuous manifold offers a graded path between candidates (``Language'' and ``Diffusion'', in this case), and the discrete neighborhood restricts the search space, allowing movement within the triangular region between hypotheses and enabling smooth transitions driven by the hints. In addressing the limits of both pure MDMs and CDMs, our contributions are:

\begin{enumerate}
\item \textit{Better token prediction with soft hints.} For masked positions, the continuous latent representations are corrupted in a smooth decay rather than an abrupt information loss, thus preserve graded proximity to the ground-truth token embedding, which reduces ambiguity and makes discrete prediction easier.

\item \textit{Diversity with multi-sample estimation.} At inference, one can resample the continuous latent (e.g., multiple latent draws per discrete state) to explore alternative yet valid choices for a token or span, which could lead to a complete view of plausible tokens, enhancing the diversity of generation.

\item \textit{Training and sampling remain simple.} CADD keeps standard cross-entropy for tokens and a standard diffusion loss for the continuous head. The sampler can alternate or jointly update the discrete and continuous states.

\item \textit{Parameter efficiency and efficient fine-tuning.} CADD requires no special architecture and can reuse the same backbone as an MDM. As a result, the number of learnable parameters matches prior MDMs, and there is no significant increase in compute cost in training. Together with simple training loss described above, this enables efficient fine-tuning of existing MDM checkpoints to obtain the benefits of CADD.
\end{enumerate}

\section{Related Work}

\paragraph{Discrete Diffusion Models}
Discrete diffusion models~\citep{hoogeboom2021multinomial, Zheng2023ARD, austin2021d3pm} operate by defining a Markov chain over the discrete token space, gradually diffusing the data with either uniform or absorbing transitions. Later, the model was unified and simplified to continuous-time masked diffusion models~\citep{campbell2022ctmc, lou2024sedd, shi2024md4, sahoo2024simple,zhang2025target}. Building on this, several recent works further scaled diffusion LMs to 7B parameters~\citep{gong2025scaling,ye2025dream,nie2024scaling}, achieving performance on par with AR models. Parallel efforts explored unified multimodal variants that model text and images both in discrete token~\citep{yang2025mmada, li2025lavida}. However, because masked diffusion models do not allow unmasked tokens to change, errors can accumulate during generation due to suboptimal unmasking in earlier steps. Several enhanced (re-)masking techniques have been proposed, using bits and simplex representation to enrich the binary choice of masking~\citep{chao2025mdmprime,song2025shortlisting}, remasking during the reverse process~\citep{gat2024discrete, zhao2024informed, wang2025remasking}, enabling edit operations~\citep{havasi2025edit,song2025seed}.  

\paragraph{Continuous Relaxations for Discrete Data}
Early continuous approaches either learn denoising in a latent embedding without explicit statistical structure~\citep{li2022diffusion, dieleman2022continuous,chen2022analog,zhang2023planner,gulrajani2023likelihoodbased} or fully relax tokens into unconstrained Euclidean space as simplex~\citep{han2022ssd, karimi-mahabadi-etal-2024-tess, tae-etal-2025-tess2}. However, such unconstrained relaxations often fail to preserve the inherent discreteness and categorical semantics of language~\citep{gulrajani2023likelihoodbased}. More recent methods impose structure in the logit space~\citep{hoogeboom2021multinomial, graves2023bayesian} or directly on the probability simplex via Dirichlet priors~\citep{avdeyev2023dirichlet, stark2024dirichlet}, enforcing stronger statistical constraints on the noising process. Flow-matching techniques further treat the simplex as a statistical manifold~\citep{liu2023mirror,cheng2024categorical, davis2024fisherflow}, yet these approaches still lag behind discrete diffusion models in generation fidelity. Recently, \citet{zhang2025flexible} leveraging density models with normalizing flow~\citep{zhai2025normalizing,gu2025starflow} for flexible language modeling, and \citet{sahoo2025duo} connect discrete diffusion language models and the underlying Gaussian diffusion.

\paragraph{Bridging Through the Lens of Mode Balancing}
Our work is also motivated by balancing mode seeking and mode covering. Related efforts pursue this balance via guidance methods that tune the diversity–precision trade-off~\citep{dhariwal2021image,ho2022classifier}; score-distillation approaches that sharpen samples while retaining diffusion training for coverage~\citep{poole2022dreamfusion,song2023consistency,luo2023diffinstruct,yin2024one,zhou2024score,zhang2025target}; and techniques that improve GAN mode coverage using diffusion or augmentation~\citep{zheng2021exploiting,zheng2023truncated,wang2023diffusiongan,karras2020training,zhao2020diffaugment}. Similar effects have been observed when distilling in a paired continuous space~\citep{sahoo2025duo}. From this perspective, we consider the discrete path in CADD possesses the mode-seeking behavior, as the unmasked tokens anchor the modes in the embedding space. The augmented continuous space spreads probability mass to cover plausible alternatives for the next token to enhance the mode coverage.

\section{Preliminary}
Let $\vx_0=(\vx_0^1,\dots,\vx_0^n)$ represent a sequence of discrete tokens from vocabulary $\mathcal V = \{1, 2, ..., V\} \cup \{\vm\}$, which contains $V$ tokens plus a mask token $\vm$ (\texttt{[MASK]}). Each position $\vx_0^i$ is a one-hot vector in $\{0,1\}^{V+1}$.
Let $\vw_\theta:\mathcal V\to\mathbb R^d$ be a learnable token embedding matrix. The embedding representations are obtained as $\vz_0:=\vw_\theta(\vx_0)$, where $\vz_0 \in \R^{n\times d}$. 
\paragraph{Discrete Diffusion Models}
The forward diffusion process is performed through an element-wise conditional sampler $q(\vx_{t}|\vx_0)=\prod_{i=1}^n q(\vx_{t}^i|\vx^i_0)$, defined as ($\delta(\cdot)$ denotes the dirac function):
\begin{equation} \label{eq:background:diffusion_kernel_t0}
q(\vx_t^i|\vx_0^i) \triangleq \alpha_t \delta(\vx_t^i -\vx_0^i) + (1-\alpha_t) \delta(\vx_t^i - \vm),
\end{equation}
where $\alpha_t \in [0,1]$ is a strictly decreasing scheduling function following $\alpha_t=\prod_{s=1}^t(1-\beta_s)$. The reverse process aims to learn $p(\vx_{s}|\vx_t)$ for $0 \le s < t \le 1$. This is typically achieved by training a model $p_\theta(\vx_0|\vx_t)$ to predict the original data from a corrupted state, optimized by minimizing a variational bound on the negative log-likelihood, denoting $\alpha^\prime_t$ the derivative of $\alpha_t$ \textit{w.r.t.} $t$:
\begin{equation} \label{eq:background:diffusion_elbo}
\mathcal{L}_\text{vb}(\boldsymbol{x}_0;\theta) \triangleq \mathbb{E}_{t, \vx_t \sim q(\cdot|\vx_0)} \left[ -\frac{\alpha'_t}{1-\alpha_t} \log p_\theta(\mathbf{x}_0|\mathbf{x}_t) \right].
\end{equation}

\paragraph{Continuous Diffusion Models}
Continuous diffusion models corrupt real-valued data $\vz_0 \!\!\in\!\! \mathbb{R}^{n \times d}$ by adding Gaussian noise scheduled by $\{ \gamma_t\}_{t=1}^T$. The forward process $q(\vz_t|\vz_0)$ is a Gaussian distribution with a closed form:
\begin{equation} \label{eq:background:continuous_forward}
q(\vz_t | \vz_0) = \mathcal{N}(\vz_t; \sqrt{\bar{\gamma}_t}\vz_0, (1-\bar{\gamma}_t)\mI)
\end{equation}
where $\bar{\gamma}_t$ is analogous to $\alpha_t$, with $\bar\gamma_t=\prod_{s=1}^t\gamma_s$ holding. The reverse process $p_\theta(\vz_{t-1}|\vz_t)$ is trained by fitting a network $f_\theta(\cdot)$ with a MSE objective reweighted by signal-to-noise ratio (SNR) function $\lambda(\bar \gamma_t, t)$:
\begin{equation} \label{eq:background:diffusion_elbo_continuous}
\mathcal{L}_\text{vb}(\vz_0;\theta) \triangleq \mathbb{E}_{t, \mathbf{x}_t \sim q(\cdot|\vz_0)} \left[ \lambda(\bar \gamma_t, t) \| f_\theta(\vz_t; t) - \vz_0 \|^2  \right].
\end{equation}

\section{Continuously Augmented Discrete Diffusion (CADD)}
Here we introduce Continuously Augmented Discrete Diffusion (CADD). The high-level intuition is to mitigate the sudden information loss that occurs when tokens are replaced by an absorbing state in discrete diffusion. Inspired by the smooth signal degradation in Gaussian diffusion, CADD augments the discrete state space with a continuous latent variable, $\vz_t$. This variable is paired with discrete tokens $\vx_t$ and is designed to retain semantics of a token's original signal even when tokens in $\vx_t$ are masked. Guided by a  set of latent vectors $\{\vz_t^{(k)}\}_{k=1}^K$, the model predicts next tokens by:
\begin{equation}
    p_\theta(\vx_{t-1} \mid \vx_t) = \E_{\vz_t} [p_\theta(\vx_{t-1} \mid \vx_t, \vz_t)] \approx \frac{1}{K}\sum_{k=1}^K p_\theta(\vx_{t-1} \mid \vx_t, \vz_t^{(k)}).
\end{equation}
Conditioning continuous view of the underlying content at step $t$ and traverse on the $\vz_t$ space, the expectation averages over plausible continuous states so the predictor could realize the distribution of the possible tokens more accurately. Noted that although we may use continuous-time notation $s$ and $t$ for diffusion steps, to improve readability, here we denote specific consecutive steps in the diffusion process by $t$ and $t-1$, with total $T$ steps. Below we present the construction of CADD with main derivations. For more detailed ELBO derivations and proofs, please refer to Appendix~\ref{sec:appendix-elbo}.

\subsection{Forward}
To let $\vz_t$ retain semantic hints of tokens in $\vx_t$ when they are masked, we define the joint transition, which can be factorized as the transitions between discrete tokens, as well as those in the paired continuous space:
\begin{equation}
q(\vx_t,\vz_t\mid \vx_{t-1},\vz_{t-1},\vx_0)
\;:=\;
\underbrace{q(\vx_t\mid \vx_{t-1})}_{\text{discrete part}}\;\cdot\;\underbrace{q(\vz_t\mid \vz_{t-1}, \vx_{t-1},\vx_t, \vx_0)}_\text{continuous part},
\label{eq:def-onestep}
\end{equation}
Given a fixed discrete schedule $\{\beta_t\}_{t=1}^T\in[0,1)^T$ and continuous diffusion schedule $\{\gamma_t\}_{t=1}^T$, the forward transition of discrete and continuous part can be written as following with $\bar\gamma_t:=\prod_{s=1}^t\gamma_s$:
\begin{align}
    &q(\vx_t\mid \vx_{t-1})= \prod_{i=1}^n \Cat \big(\vx_t^i;\ \mQ_t^\top \vx_{t-1}^i\big), ~~~ \mQ_t=(1-\beta_t)\mI+\beta_t\,\mathbf 1\,\bm m^\top.
\label{eq:def-discrete}\\
&q(\vz_t\mid \vz_{t-1}, \vx_{t-1},\vx_t, \vx_0)
=\prod_{i=1}^n
\begin{cases}
\delta(\vz_t^i-\vz_{t-1}^i), & \vx_t^i\neq\vm,\\
\mathcal N\!\big(\vz_t^i;\ \sqrt{\bar \gamma_t}\,\vz_{t-1}^i,\ (1-\bar\gamma_t)\mI_d\big), & \vx_t^i=\vm, \vx_{t-1}^i\neq\vm,\\
\mathcal N\!\big(\vz_t^i;\ \sqrt{\gamma_t}\,\vz_{t-1}^i,\ (1-\gamma_t)\mI_d\big), & \vx_t^i=\vm, \vx_{t-1}^i=\vm.
\end{cases}
\label{eq:def-continuous}
\end{align}
The discrete path still follows the Markov chain and the status at $t$ only depends on the last time step. The continuous path is thus affected by the status at the last time $t-1$ in the latent pace, as well as how the discrete token changes between these two steps, \textit{e.g.}, whether this token is masked for the first time or it is already masked/unmasked.

\begin{wrapfigure}{r}{0.6\textwidth} 
    \centering
    \includegraphics[width=0.6\textwidth]{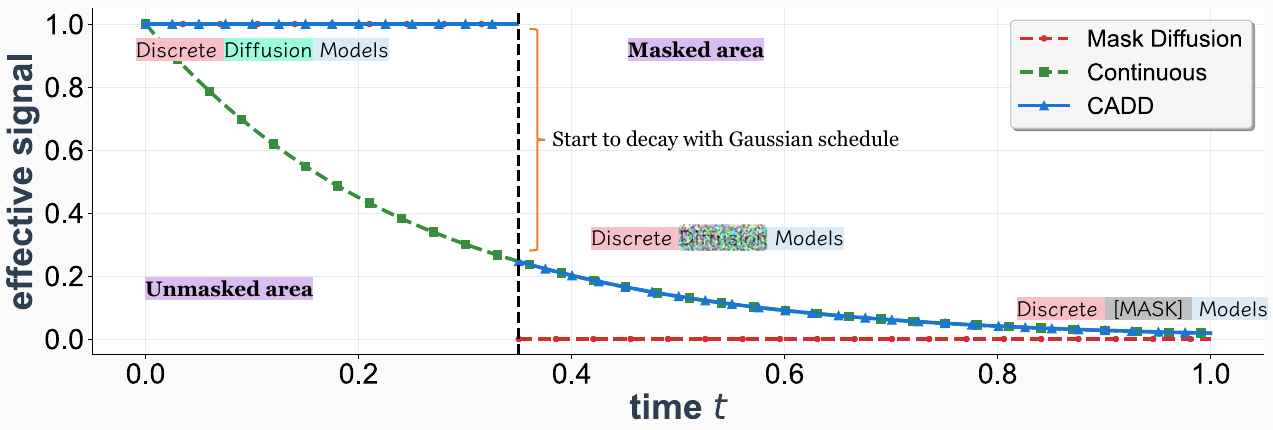} \vspace{-4mm}
    \caption{Example of Signal-to-Noise Ratio (SNR) change of \emph{one token} in the forward of vanilla Mask Diffusion vs. CADD (\textit{Best view in color}). After the second token is masked at the first time, CADD gradually corrupt the information of this token with Gaussian diffusion in the latents, resulting in a smooth decay.}
    \label{fig:snr_example}
\end{wrapfigure}
As a result, the discrete transition is the same as normal discrete diffusion like \citet{austin2021d3pm} and acts as a trigger for the continuous embedding's evolution. 
The continuous trajectory for an embedding remains dormant as long as its token is unmasked, holding its value constant at its original state ($\delta(\vz_t^i - \vz_{t-1}^i) = \delta(\vz_t^i - \vz_{0}^i)$ if $\vx_t^i$ is never masked as the information is not changed). The moment a token is masked, it triggers the continuous diffusion process for its embedding. The embedding then begins a smooth degradation path determined by the Gaussian diffusion~\citep{ho2020ddpm}. If a token stays masked, its embedding simply continues along this path, becoming progressively noisier. \Figref{fig:snr_example} illustrates how our forward process differs from vanilla Mask Diffusion. When all tokens are visible, the SNR for both Mask Diffusion and CADD equals 1. Once a token is masked, the SNR in Mask Diffusion drops to 0 because the absorbing \texttt{[MASK]} carries no token-specific signal. In CADD, the paired continuous latent at that position follows a Gaussian diffusion, so its SNR decays smoothly over time, reflecting graded corruption rather than an abrupt loss.

Now we extend the case to the marginals at timestep $t$ with the following proposition.

\begin{proposition}[Timestep-$t$ joint marginal factorization]
\label{prop:factorization-vector}
The marginal at timestep $t$ can be factorized:
\begin{equation}
{\quad q(\vx_t,\vz_t\mid \vx_0)\ =\ q(\vx_t\mid \vx_0)\ \cdot\ q(\vz_t\mid \vx_t,\vx_0)\quad}
\label{eq:qt-factorizes-vector}
\end{equation}
Given $\alpha_t:=\prod_{s=1}^t(1-\beta_s)$ and $\overline \mQ_t:=\prod_{s=1}^t \mQ_s=\alpha_t \mI+(1-\alpha_t)\,\mathbf 1\,\bm m^\top$ and $\bar\gamma_t:=\prod_{s=1}^t\gamma_s$, with $\vz_0^i=\vw_\theta(\vx_0^i)$, the two terms factorized above represent the discrete and continuous part:
\begin{align}
    &q(\vx_t\mid \vx_0)=\prod_{i=1}^n q(\vx_t^i\mid \vx_0^i), \quad q(\vx_t^i\mid \vx_0^i)=\Cat(\vx_t^i;\ \overline \mQ_t^\top \vx_0^i). \label{eq:def-discrete-marginal}\\
&q(\vz_t\mid \vx_t,\vx_0)=\prod_{i=1}^n q(\vz_t^i\mid \vx_t^i,\vx_0^i)=\prod_{i=1}^n
\begin{cases}
\delta(\vz_t^i-\vz_0^i), & \vx_t^i=\vx_0^i,\\
\mathcal N\!\big(\vz_t^i;\ \sqrt{\bar\gamma_t}\,\vz_0^i,\ (1-\bar\gamma_t)\mI_d\big), & \vx_t^i=\vm,
\end{cases} \label{eq:continuous_marginal_forward}
\end{align}

\end{proposition}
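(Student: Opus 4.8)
The plan is to exploit the coordinate-wise product structure already visible in \eqref{eq:def-discrete}--\eqref{eq:def-continuous} and to condition on the \emph{entire} discrete trajectory $\vx_{0:t}$ before integrating out the intermediate latents $\vz_{1:t-1}$. Writing $\vz_0^i:=\vw_\theta(\vx_0^i)$ (deterministic given $\vx_0$), chaining the one-step factorization \eqref{eq:def-onestep} over $s=1,\dots,t$ gives $q(\vx_{1:t},\vz_{1:t}\mid\vx_0)=\big[\prod_{s} q(\vx_s\mid\vx_{s-1})\big]\big[\prod_{s} q(\vz_s\mid\vz_{s-1},\vx_{s-1},\vx_s,\vx_0)\big]$, so after marginalizing $\vz_{1:t-1}$ and summing over $\vx_{1:t-1}$ one obtains $q(\vx_t,\vz_t\mid\vx_0)=\sum_{\vx_{1:t-1}} q(\vx_{1:t}\mid\vx_0)\,q(\vz_t\mid\vx_{0:t})$, where $q(\vz_t\mid\vx_{0:t})$ denotes the marginal of $\vz_t$ under the Gaussian/Dirac chain in \eqref{eq:def-continuous}. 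Both the discrete law and this conditional are products over positions $i$, so I would argue one coordinate at a time.

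For the discrete marginal \eqref{eq:def-discrete-marginal}, the $\vx^i$-chain is Markov with transition matrices $\mQ_s$, so its $t$-step law is governed by $\overline\mQ_t=\mQ_1\cdots\mQ_t$, and I would show $\overline\mQ_t=\alpha_t\mI+(1-\alpha_t)\mathbf 1\bm m^\top$ by induction on $t$: the base case is the definition, and the inductive step expands $[\alpha_{t-1}\mI+(1-\alpha_{t-1})\mathbf 1\bm m^\top]\mQ_t$ and collapses $\mathbf 1\bm m^\top\mathbf 1\bm m^\top=\mathbf 1\bm m^\top$ via the absorbing identity $\bm m^\top\mathbf 1=1$; collecting coefficients gives $\alpha_{t-1}(1-\beta_t)=\alpha_t$ on $\mI$ and $1-\alpha_t$ on $\mathbf 1\bm m^\top$. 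Hence $q(\vx_t^i\mid\vx_0^i)=\Cat(\vx_t^i;\overline\mQ_t^\top\vx_0^i)$, and summing the Markov chain over $\vx_{1:t-1}$ yields $q(\vx_t\mid\vx_0)=\prod_i q(\vx_t^i\mid\vx_0^i)$.

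The heart of the proof is the continuous marginal \eqref{eq:continuous_marginal_forward}, and here I would use the monotone (absorbing) structure of the discrete path: on the support of $q(\vx_{1:t}\mid\vx_0)$ there is a first masking time $\tau=\tau^i\in\{1,\dots,t\}\cup\{\infty\}$ with $\vx_s^i=\vx_0^i$ for $s<\tau$ and $\vx_s^i=\vm$ for $s\ge\tau$. If $\vx_t^i=\vx_0^i$ then $\tau>t$, every branch taken in \eqref{eq:def-continuous} is the Dirac one, and $\vz_t^i=\vz_0^i$. If $\vx_t^i=\vm$ then $\tau\le t$: the latent is pinned at $\vz_0^i$ until step $\tau$, makes the jump $\vz_\tau^i\mid\vz_0^i\sim\mathcal N(\sqrt{\bar\gamma_\tau}\vz_0^i,(1-\bar\gamma_\tau)\mI_d)$ at step $\tau$, and then runs the per-step kernels with ratios $\gamma_{\tau+1},\dots,\gamma_t$. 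The key lemma is a one-line Gaussian composition: if $\vz_{s-1}^i\sim\mathcal N(\sqrt{\bar\gamma_{s-1}}\vz_0^i,(1-\bar\gamma_{s-1})\mI_d)$ then $\vz_s^i\sim\mathcal N(\sqrt{\gamma_s\bar\gamma_{s-1}}\,\vz_0^i,\,[\gamma_s(1-\bar\gamma_{s-1})+(1-\gamma_s)]\mI_d)=\mathcal N(\sqrt{\bar\gamma_s}\vz_0^i,(1-\bar\gamma_s)\mI_d)$, using only $\bar\gamma_s=\gamma_s\bar\gamma_{s-1}$ (and $\bar\gamma_\tau$ is precisely the variance scale at which the jump deposits $\vz_\tau^i$, so the induction starts cleanly at $s=\tau$, with $\bar\gamma_1=\gamma_1$ handling $\tau=1$). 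Iterating up to $s=t$ gives $\vz_t^i\sim\mathcal N(\sqrt{\bar\gamma_t}\vz_0^i,(1-\bar\gamma_t)\mI_d)$ \emph{independently of $\tau$}. I expect this path-independence to be the main obstacle to state carefully — it is the whole reason the three-case schedule in \eqref{eq:def-continuous} is designed as it is — but once established it shows $q(\vz_t^i\mid\vx_{0:t}^i)$ depends on the trajectory only through $\vx_t^i$ and $\vx_0^i$, i.e. $q(\vz_t\mid\vx_{0:t})=\prod_i q(\vz_t^i\mid\vx_t^i,\vx_0^i)$ with the cases exactly as in \eqref{eq:continuous_marginal_forward}.

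Finally I would assemble the factorization: since $q(\vz_t\mid\vx_{0:t})$ no longer depends on $\vx_{1:t-1}$ on the relevant support, it factors out of the sum, $q(\vx_t,\vz_t\mid\vx_0)=\big(\sum_{\vx_{1:t-1}} q(\vx_{1:t}\mid\vx_0)\big)\prod_i q(\vz_t^i\mid\vx_t^i,\vx_0^i)=q(\vx_t\mid\vx_0)\,q(\vz_t\mid\vx_t,\vx_0)$, which is \eqref{eq:qt-factorizes-vector} with the two factors already identified in \eqref{eq:def-discrete-marginal} and \eqref{eq:continuous_marginal_forward}. The only routine loose ends are treating the Dirac branches as degenerate Gaussians (or, equivalently, conditioning on the events $\{\tau^i=s\}$ and summing) and the bookkeeping of the position-wise products, neither of which affects the structure of the argument.
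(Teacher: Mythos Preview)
Your proposal is correct and follows essentially the same route as the paper. The paper packages the continuous marginal as a separate lemma proved by induction on $t$ via the law of total probability over $\vx_{t-1}^i\in\{\vx_0^i,\vm\}$, while you condition on the whole discrete trajectory and introduce the first-masking time $\tau$; both arguments hinge on the identical Gaussian composition $\mathcal N(\sqrt{\bar\gamma_{s-1}}\vz_0^i,(1-\bar\gamma_{s-1})\mI_d)\ast\mathcal N(\sqrt{\gamma_s}\cdot,(1-\gamma_s)\mI_d)=\mathcal N(\sqrt{\bar\gamma_s}\vz_0^i,(1-\bar\gamma_s)\mI_d)$ and the absorbing structure of the discrete chain, and your explicit induction for $\overline\mQ_t$ just fills in a step the paper leaves unstated.
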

A key property of the marginal distribution $q(\vx_t, \vz_t \mid \vx_0)$ is that it conveniently factorizes into discrete and continuous components: $q(\vx_t \mid \vx_0)$ and $q(\vz_t \mid \vx_t, \vx_0)$. This factorization is highly advantageous, as the distribution for each component is tractable and can be computed in closed form according to the predefined diffusion schedule.

\begin{figure}[t]
    \centering
    \includegraphics[width=\linewidth]{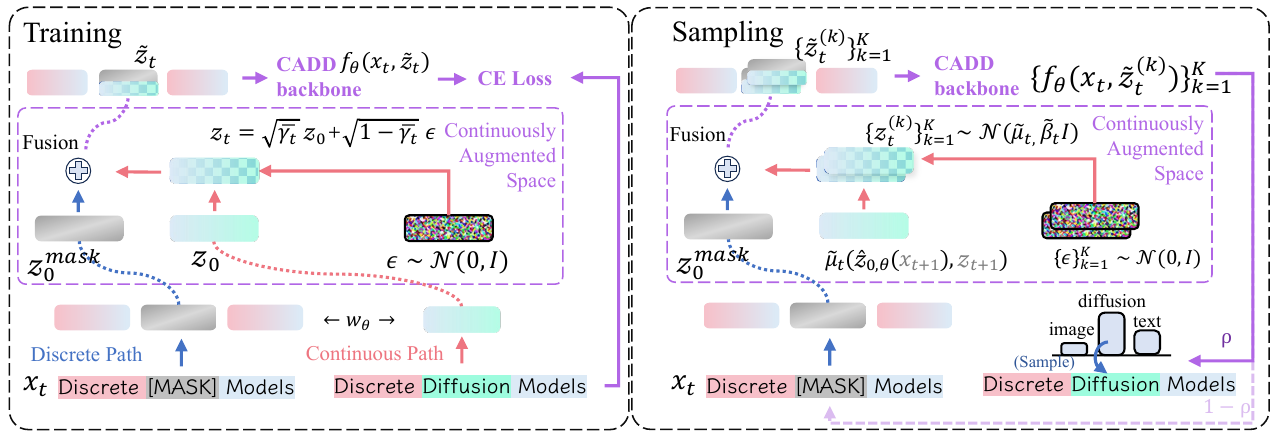}\vspace{-2mm}
    \caption{(\emph{Best view in color}) Illustrative depiction of CADD model, combining both the discrete and continuous feature of the data. In training, the clean token at the masked position will be created by embedding matrix and used to form the noisy embedding according to the continuous forward. In sampling, the model is able to predict a diverse distribution of possible tokens by sampling multiple $\vz_{t}$. Then the predicted tokens will be recycled into the embedding matrix to form $\hat \vz_{0, \theta}$ for the next iteration.  }
    \label{fig:model-illustration}
\end{figure}

\subsection{Reverse}
Following \citet{kingma2021variational,xiao2021tackling,zhou2023beta}, we choose the conditional distribution parameterized with neural network $f_\theta(\cdot)$ to define:
\begin{align}
p_\theta(\vx_{t-1},\vz_{t-1}\mid \vx_t,\vz_t)
&:= q(\vx_{t-1},\vz_{t-1}\mid \vx_t,\vz_t, \vx_0 = \hat{\vx}_0),
\label{eq:reverse-mixture-def}\\
p_\theta(\hat{\vx}_0\mid \vx_t,\vz_t)&=\mathrm{Categorical}\big(\mathrm{logits}=f_\theta(\vx_t,\vz_t)\big) \text{  if  } \vx_t=\vm \text{  else } \delta(\hat \vx_0 - \vx_t).
\end{align}
The objective is to close the gap between the defined parametric distribution and the true posterior. Below we presenet the close form of the posterior. For notation simplicity, below we discuss on per position formulation and omit the notation $i$, since all distributions factorize across positions $i\in\{1,\dots,n\}$. 

\begin{proposition}[Factorization of the true posterior] \label{prop:posterior-factorization}
By the forward construction, the posterior can be factorized in the following form
\begin{equation}
q(\vx_{t-1},\vz_{t-1}\mid \vx_t,\vz_t,\vx_0)
= \underbrace{q(\vx_{t-1}\mid \vx_t,\vx_0)}_\text{discrete part}\;\cdot\;\underbrace{q(\vz_{t-1}\mid \vx_t,\vz_t,\vx_{t-1},\vx_0)}_\text{continuous part}.
\label{eq:true-post-factor}
\end{equation}
Moreover, we can write the close form of each component:
\begin{equation}
q(\vx_{t-1}|\vx_t,\vx_0) = \frac{q(\vx_t|\vx_{t-1})q(\vx_{t-1}|\vx_0)}{q(\vx_t|\vx_0)} \\
= \begin{cases}
\frac{\alpha_{t-1} - \alpha_t}{1-\alpha_t} \vx_{t-1}^\top \vx_0 & \vx_{t-1} \neq \vm, \vx_t = \vm \\
\frac{1-\alpha_{t-1}}{1-\alpha_t} & \vx_{t-1} = \vm, \vx_t = \vm \\
\vx_{t-1}^\top \vx_t & \vx_t \neq \vm.
\end{cases}
\label{eq:disc-post-weights} 
\end{equation}
\begin{equation}
q(\vz_{t-1}\mid \vx_t,\vz_t,\vx_{t-1},\vx_0)=
\begin{cases}
\delta(\vz_{t-1}-\vz_0),& \vx_t=\vx_0 \;\;(\text{no mask at }t),\\
\delta(\vz_{t-1}-\vz_0),& \vx_t=\vm,\ \vx_{t-1}=\vx_0 \;\;(\text{first unmask at }t),\\
\mathcal N\!\big(\vz_{t-1};\,\tilde\vmu_t,\ \tilde\beta_t \mI_d\big),& \vx_t=\vm,\ \vx_{t-1}=\vm,
\end{cases}
\label{eq:cont-post} 
\end{equation}
with the following paramters:
\begin{equation}
\tilde\beta_t=\frac{(1-\bar\gamma_{t-1})\,(1-\gamma_t)}{1-\bar\gamma_t},
\qquad
\tilde\vmu_t=\frac{\sqrt{\bar\gamma_{t-1}}\,(1-\gamma_t)}{1-\bar\gamma_t}\,\vz_0
+\frac{\sqrt{\gamma_t}\,(1-\bar\gamma_{t-1})}{1-\bar\gamma_t}\,\vz_t.
\label{eq:vp-posterior-params}
\end{equation}
\end{proposition}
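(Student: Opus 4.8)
The plan is to establish the factorization in \eqref{eq:true-post-factor} first and then derive the closed forms separately for the discrete and continuous parts. For the factorization, I would apply Bayes' rule to the joint posterior, $q(\vx_{t-1},\vz_{t-1}\mid \vx_t,\vz_t,\vx_0) \propto q(\vx_t,\vz_t\mid \vx_{t-1},\vz_{t-1},\vx_0)\,q(\vx_{t-1},\vz_{t-1}\mid \vx_0)$, and then invoke the one-step factorization \eqref{eq:def-onestep} together with the marginal factorization from Proposition~\ref{prop:factorization-vector}. The key observation is that the continuous forward kernel \eqref{eq:def-continuous} depends on $\vx_{t-1}$ and $\vx_t$ only through which ``regime'' the position is in, and crucially the joint $q(\vx_t,\vz_t\mid\vx_{t-1},\vz_{t-1},\vx_0)$ splits as a discrete piece $q(\vx_t\mid\vx_{t-1})$ times a continuous piece that is conditioned on the discrete variables; combining with $q(\vx_{t-1},\vz_{t-1}\mid\vx_0)=q(\vx_{t-1}\mid\vx_0)\,q(\vz_{t-1}\mid\vx_{t-1},\vx_0)$ and regrouping terms, the $\vz$-dependent factors collect into $q(\vz_{t-1}\mid\vx_t,\vz_t,\vx_{t-1},\vx_0)$ while the remaining $\vx$-only factors collect into $q(\vx_{t-1}\mid\vx_t,\vx_0)$ after normalization.

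For the discrete part \eqref{eq:disc-post-weights}, I would simply write $q(\vx_{t-1}\mid\vx_t,\vx_0)=q(\vx_t\mid\vx_{t-1})\,q(\vx_{t-1}\mid\vx_0)/q(\vx_t\mid\vx_0)$ and plug in the one-step kernel \eqref{eq:def-discrete} and the marginal \eqref{eq:def-discrete-marginal}. This is the standard D3PM absorbing-state posterior computation: case-split on whether $\vx_t=\vm$ (if not, the forward kernel forces $\vx_{t-1}=\vx_t$ deterministically) and, when $\vx_t=\vm$, on whether $\vx_{t-1}=\vm$ or $\vx_{t-1}=\vx_0$; reading off the categorical weights $\alpha_{t-1}(1-\beta_t)/\ldots$ etc.\ and simplifying using $\alpha_t=\alpha_{t-1}(1-\beta_t)$ gives the three stated cases.

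For the continuous part \eqref{eq:cont-post}, the case analysis mirrors the discrete one. If the token is never masked at $t$ (i.e.\ $\vx_t=\vx_0$) or if $t$ is the first unmasking step (i.e.\ $\vx_t=\vm,\vx_{t-1}=\vx_0$), the marginal \eqref{eq:continuous_marginal_forward} pins $\vz_{t-1}=\vz_0$ deterministically, so the posterior is the same Dirac. The only nontrivial case is $\vx_t=\vm,\vx_{t-1}=\vm$: here $\vz_{t-1}\mid\vx_0\sim\mathcal N(\sqrt{\bar\gamma_{t-1}}\vz_0,(1-\bar\gamma_{t-1})\mI_d)$ and $\vz_t\mid\vz_{t-1}\sim\mathcal N(\sqrt{\gamma_t}\vz_{t-1},(1-\gamma_t)\mI_d)$, so this is the textbook Gaussian-diffusion posterior; completing the square (or citing the standard DDPM identity) yields the stated $\tilde\beta_t$ and $\tilde\vmu_t$. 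I would double-check consistency by noting $\sqrt{\gamma_t}\sqrt{\bar\gamma_{t-1}}=\sqrt{\bar\gamma_t}$ and that the two coefficients in $\tilde\vmu_t$ reduce correctly in the appropriate limits.

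The main obstacle is the factorization step, specifically handling the ``boundary'' regimes of the continuous kernel cleanly: the continuous kernel's behavior changes depending on whether the token is masked at $t-1$ and/or at $t$, so one must be careful that the regrouping of Bayes' rule is valid simultaneously across all these regimes and that the $\vx$-dependence genuinely separates — i.e.\ that $q(\vz_{t-1}\mid\vx_t,\vz_t,\vx_{t-1},\vx_0)$ as defined really is a normalized conditional and the leftover factor really is $q(\vx_{t-1}\mid\vx_t,\vx_0)$ with nothing lost. The cleanest way around this is to observe that conditioning on $\vx_0$, $\vx_t$, and $\vx_{t-1}$ fixes the regime entirely, so within each regime both conditional densities are elementary, and then verify the product recombines to the joint; the per-position factorization makes this bookkeeping manageable.
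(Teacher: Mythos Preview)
Your proposal is correct and arrives at the same destination as the paper, but the route for the factorization \eqref{eq:true-post-factor} differs slightly in organization. You plan to apply Bayes' rule to the \emph{joint} posterior $q(\vx_{t-1},\vz_{t-1}\mid\vx_t,\vz_t,\vx_0)$, expand the forward kernel and the prior using \eqref{eq:def-onestep} and Proposition~\ref{prop:factorization-vector}, and then regroup the $\vz$-dependent and $\vx$-only factors, verifying per regime that nothing is lost. The paper instead starts from the chain rule $q(\vx_{t-1},\vz_{t-1}\mid\cdot)=q(\vx_{t-1}\mid\vx_t,\vz_t,\vx_0)\cdot q(\vz_{t-1}\mid\vx_t,\vz_t,\vx_{t-1},\vx_0)$ and then simplifies the first factor via Bayes, invoking a separately stated conditional-independence lemma ($\vz_t\perp\vx_{t-1}\mid(\vx_t,\vx_0)$) to cancel $q(\vz_t\mid\vx_{t-1},\vx_t,\vx_0)/q(\vz_t\mid\vx_t,\vx_0)=1$. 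Your ``main obstacle'' paragraph is precisely this conditional independence in disguise: the regrouping you describe works cleanly if and only if the $\vz_t$-normalizer does not depend on $\vx_{t-1}$, which is exactly that lemma. Isolating it as the paper does avoids the per-regime bookkeeping you anticipate; your approach is more hands-on but equally valid. For the discrete and continuous closed forms your plan matches the paper's (cite the D3PM posterior for \eqref{eq:disc-post-weights}; case-split plus the DDPM Gaussian posterior for \eqref{eq:cont-post}).
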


\begin{lemma}\label{lemma:obj}
    For the unmasked positions ($\vx_t\neq\vm$), the KL is identically $0$, and the masked positions splits exactly as
\begin{equation}
\label{eq:exact-split}
\KL\!\big(q(\cdot\mid \vx_t,\vz_t,\vx_0)\,\big\|\,p_\theta(\cdot\mid \vx_t,\vz_t)\big)
=\underbrace{\rho_t^{\mathrm{flip}}\ \big[-\log p_{\theta}(\vx_0|\vx_t, \vz_t)\big]}_{\text{discrete}}
+\underbrace{ \rho_t^{\mathrm{keep}}\ \mathcal \KL^{\mathrm{cont}}}_{\text{continuous}},
\end{equation} 
with the ratio that determines whether the position is going to be flipped to unmask or keep moving in the continuous space:
\begin{equation}
\rho_t^{\mathrm{keep}}=\frac{1-\alpha_{t-1}}{1-\alpha_t}, 
\qquad
\rho_t^{\mathrm{flip}}=\frac{\alpha_{t-1}\,\beta_t}{1-\alpha_t} = \frac{\alpha_{t-1}-\alpha_t}{1-\alpha_t}. 
\label{eq:rho_compute}
\end{equation}
The KL divergence in the continuous space has a reweighted MSE form:
\begin{equation}
\label{eq:cont-single-gauss-exact}
\KL^{\mathrm{cont}}
=\frac{1}{2\tilde\beta_t}\,\big\|\tilde\vmu_t(\vz_0,\vz_t)-\tilde\vmu_t(\hat \vz_{0,\theta},\vz_t^i)\big\|^2
=\frac{a_t^2}{2\tilde\beta_t}\;\|\vz_0-\hat \vz_{0,\theta}\|^2;\ a_t=\frac{\sqrt{\bar\gamma_{t-1}}(1-\gamma_t)}{1-\bar\gamma_t}.
\end{equation}
\end{lemma}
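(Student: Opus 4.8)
The plan is to compute the KL in \eqref{eq:exact-split} directly from the closed forms in Propositions~\ref{prop:factorization-vector} and~\ref{prop:posterior-factorization}, treating the regimes $\vx_t\neq\vm$ and $\vx_t=\vm$ separately. In the unmasked regime, the third case of \eqref{eq:disc-post-weights} gives $q(\vx_{t-1}\mid\vx_t,\vx_0)=\delta(\vx_{t-1}-\vx_t)$; since $\vx_t\neq\vm$ forces $\vx_t=\vx_0$ and hence, by \eqref{eq:continuous_marginal_forward}, $\vz_t=\vz_0$, the first case of \eqref{eq:cont-post} gives $q(\vz_{t-1}\mid\cdots)=\delta(\vz_{t-1}-\vz_t)$. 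On the parametric side the definition \eqref{eq:reverse-mixture-def} uses $p_\theta(\hat\vx_0\mid\vx_t,\vz_t)=\delta(\hat\vx_0-\vx_t)$, so $p_\theta$ reduces to $q(\cdot\mid\vx_t,\vz_t,\vx_0=\vx_t)=q(\cdot\mid\vx_t,\vz_t,\vx_0)$, i.e.\ it coincides with the true posterior and the KL is identically $0$.

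For $\vx_t=\vm$, I would observe that, conditioned on this event, the joint posterior in \eqref{eq:disc-post-weights}--\eqref{eq:cont-post} is a two-component mixture over the product space: a ``first-unmask'' Dirac atom at $(\vx_{t-1},\vz_{t-1})=(\vx_0,\vz_0)$ with weight $\rho_t^{\mathrm{flip}}=\tfrac{\alpha_{t-1}-\alpha_t}{1-\alpha_t}$ (using $\vz_0=\vw_\theta(\vx_0)$), plus a ``keep-masked'' slab $\{\vx_{t-1}=\vm\}\times\R^d$ with weight $\rho_t^{\mathrm{keep}}=\tfrac{1-\alpha_{t-1}}{1-\alpha_t}$ and conditional density $\mathcal N(\tilde\vmu_t(\vz_0,\vz_t),\tilde\beta_t\mI_d)$. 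Plugging the prediction into the same formulas, $p_\theta$ puts weight $\rho_t^{\mathrm{keep}}$ on that slab with shifted mean $\tilde\vmu_t(\hat\vz_{0,\theta},\vz_t)$, and weight $\rho_t^{\mathrm{flip}}\,p_\theta(\vx_0\mid\vx_t,\vz_t)$ on the atom $(\vx_0,\vw_\theta(\vx_0))$ -- the \emph{same} point, because the reverse model recycles its token prediction through the same embedding $\vw_\theta$ -- with the remaining first-unmask mass on atoms $(\vx_{t-1},\vw_\theta(\vx_{t-1}))$, $\vx_{t-1}\neq\vx_0$, where $q$ has no mass. The supports in the $\vx_{t-1}$ coordinate are disjoint, so the KL is the sum of the atom and slab contributions; on the atom the $\rho_t^{\mathrm{flip}}$ factors cancel inside the log and the Diracs coincide, leaving $\rho_t^{\mathrm{flip}}[-\log p_\theta(\vx_0\mid\vx_t,\vz_t)]$, and on the slab the $\rho_t^{\mathrm{keep}}$ factors cancel, leaving $\rho_t^{\mathrm{keep}}\,\KL(\mathcal N(\tilde\vmu_t(\vz_0,\vz_t),\tilde\beta_t\mI_d)\,\|\,\mathcal N(\tilde\vmu_t(\hat\vz_{0,\theta},\vz_t),\tilde\beta_t\mI_d))$, which is exactly the split~\eqref{eq:exact-split}.

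It then remains to evaluate the pieces. The identity $\rho_t^{\mathrm{flip}}=\tfrac{\alpha_{t-1}-\alpha_t}{1-\alpha_t}=\tfrac{\alpha_{t-1}\beta_t}{1-\alpha_t}$ is immediate from $\alpha_t=\alpha_{t-1}(1-\beta_t)$, and $\rho_t^{\mathrm{keep}}+\rho_t^{\mathrm{flip}}=1$ serves as a consistency check. For $\KL^{\mathrm{cont}}$, the two Gaussians share covariance $\tilde\beta_t\mI_d$, so the general Gaussian-KL formula collapses to $\tfrac{1}{2\tilde\beta_t}\|\tilde\vmu_t(\vz_0,\vz_t)-\tilde\vmu_t(\hat\vz_{0,\theta},\vz_t)\|^2$; and since \eqref{eq:vp-posterior-params} shows $\tilde\vmu_t$ is affine in its first slot with slope $a_t=\tfrac{\sqrt{\bar\gamma_{t-1}}(1-\gamma_t)}{1-\bar\gamma_t}$ while the $\vz_t$ term is common to both means, the difference equals $a_t(\vz_0-\hat\vz_{0,\theta})$, giving $\tfrac{a_t^2}{2\tilde\beta_t}\|\vz_0-\hat\vz_{0,\theta}\|^2$ as in \eqref{eq:cont-single-gauss-exact}.

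The main obstacle is the bookkeeping in the masked regime: one has to treat the joint posterior on the product (discrete)$\times$(continuous) space as a genuine mixture of a Dirac atom and an absolutely continuous slab, verify the absolute-continuity and alignment facts that make the atom contribute only a clean log-likelihood term (this is where $\vz_0=\vw_\theta(\vx_0)$ and the shared embedding matrix are essential), and confirm that the parametric continuous kernel on the keep-masked slab is a \emph{single} Gaussian with mean $\tilde\vmu_t(\hat\vz_{0,\theta},\vz_t)$ -- which is what makes the split hold \emph{exactly} rather than up to a Jensen gap, and corresponds to letting $f_\theta$ supply the point prediction $\hat\vz_{0,\theta}$ plugged into \eqref{eq:vp-posterior-params}. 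Everything after that setup is the routine Gaussian-KL computation and D3PM-style discrete accounting sketched above.
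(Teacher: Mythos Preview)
Your proposal is correct and follows essentially the same route as the paper's proof: plug the closed-form posteriors from Proposition~\ref{prop:posterior-factorization} into the per-step KL, split over the discrete branches $\{\vx_{t-1}=\vx_0\}$ and $\{\vx_{t-1}=\vm\}$, cite the D3PM result for the discrete piece, and reduce the continuous piece to the equal-covariance Gaussian KL. Your explicit atom-plus-slab bookkeeping---in particular, checking that the $q$-atom at $(\vx_0,\vz_0)$ coincides with the $p_\theta$-atom because both route through $\vw_\theta$, and flagging that the keep-masked slab in $p_\theta$ must be parameterized as a \emph{single} Gaussian at $\tilde\vmu_t(\hat\vz_{0,\theta},\vz_t)$ for the split to be exact---is more careful than the paper's terse ``plug in'' argument, but the substance is the same.
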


\begin{figure}[t]
\begin{minipage}[t]{0.48\textwidth}
\begin{algorithm}[H] 
\caption{Training of CADD}\label{alg:cadd-train}
\begin{algorithmic}[1]
\small
\State \textbf{Input:} dataset $\mathcal{X}$, network $f_\theta(\cdot)$, masking schedule $\{\alpha_t\}_{t=1}^T$, continuous schedule $\{\bar \gamma_t\}_{t=1}^T$
\While{not converged}
    \State draw data $\vx_0 \sim \mathcal{X}$, draw $t \sim \mathrm{Uniform}({1,...,T})$
    \State mask out each token position $\vx_0^{i}$ with probability $1 - \alpha_{t}$ to obtain $\vx_{t}$
    \State form discrete embeddings $\vz_\mathrm{disc} \!\!\gets\!\! \vw_\theta(\vx_{t})$, 
    \State form continuous embeddings \If{ $\vx_{t}^{i} = \vm$} $\vz_t \!\!\gets\!\! \vw_\theta(\vx_0)$  \Else ~~$\vz_t \!\!\gets \vzero$ \EndIf
    \For{position $i \in \{1, ..., n\}$, if $\vx_{t}^{i} = \vm$}, 
    $\vz_{t}^{i} \!\!\gets\!\! \sqrt{\bar \gamma_{t}} \vz_{t}^{i} + \sqrt{1-\bar \gamma_{t}}\veps,  \veps\sim (\vzero, \mI) $ 
    \EndFor
    \State $\tilde \vz_{t} \gets \vz_\mathrm{disc} + \vz_t$, compute logits $f_\theta(\tilde \vz_{t})$
    \State optimize with cross entropy loss in \eqref{eq:training loss}
\EndWhile
\normalsize
\end{algorithmic}
\end{algorithm}
\end{minipage}%
\hfill
\begin{minipage}[t]{0.48\textwidth}
\begin{algorithm}[H]
\caption{Sampling of CADD}\label{alg:cadd-sample}
\begin{algorithmic}[1]
\small
\State \textbf{Input:} desired number of samples $B$, network $f_\theta(\cdot)$, schedules $\{\alpha_t\}_{t=1}^T$, $\{\bar \gamma_t\}_{t=1}^T$, 
\While{not reach desired size $B$ }
\State \textbf{init: } \!\!$\vx_T \!\gets \! (\vm, ... \vm)$, $\vz_T \!\overset{\text{i.i.d.}}{\sim}\! \mathcal{N}(\vzero, \mI)$
\For{$t = T, \dots, 1$}
\For{$i = 1, \dots, n$, if $\vx_{t}^{i} = \vm$   }
\State \!\!compute $\rho_{t}^{\!\!\mathrm{flip}\!\!}$ and $\rho_t^{\!\!\mathrm{keep}\!\!}$ (\eqref{eq:rho_compute})
\State determine whether to unmask $\vx_{t-1}^{i}  \sim \mathrm{Cat}(\rho_{t}^{\mathrm{flip}}  f_\theta(\vx_{t}^{i}, \vz_{t}^{i})   +   \rho_t^{\mathrm{keep}}\vm)$
\If{$\vx_{t-1}^{i} \gets \vm$ }  
draw $\vz_{t-1}^i\sim \mathcal N\!\Big(\tilde \vmu_t\big(\hat \vz_{0,\theta}^i,\ \vz_t^i\big),\ \tilde\beta_t \mI_d\Big)$ with \eqref{eq:vp-posterior-params}
\Else 
~~$\vz_{t-1}^{i} \gets \vw_\theta(\vx_{t-1}^{i})$ 
\EndIf 
\EndFor
\EndFor
\EndWhile
\normalsize
\end{algorithmic}
\end{algorithm}
\end{minipage}

\end{figure}\vspace{-2pt}

\subsection{Algorithm and Implementation}
Given the results above, below comes the training and sampling algorithms. The model design is illustrated in~\Figref{fig:model-illustration} regarding how the model is trained and how it handles one sampling step.
\paragraph{Training Loss}
According to \eqref{eq:exact-split}, the model aims to learn to maximize the likelihood of discrete path, and also minimize the reweighted MSE in \eqref{eq:cont-single-gauss-exact}.  Inspired by continuous diffusion models that used for categorical modeling, e.g., CDCD~\citep{dieleman2022continuous} and Plaid~\citep{gulrajani2023likelihoodbased}, we may estimate  $\hat{\vz}_{0,\theta}:=\sum_v p_\theta(\hat \vx_0 = v \mid \vx_t,\vz_t)\,\vw_{\theta, v}$ and just train the model to predict correct categorical output to minimize the KL divergence. Thus, we choose to train CADD by minimizing a simple cross entropy loss as following and the training is summarized in Algorithm~\ref{alg:cadd-train}:
\begin{equation}
    \boxed{\gL_\mathrm{CADD} =\E_{t \sim \mathrm{Uniform}({1,...,T})}\E_{q(\vx_t, \vz_t \mid \vx_0)}\big[-\sum_{i:\,\vx_t^i=\vm} \log p_{\theta}(\vx_0^i \mid \vx_t^i, \vz_t^i)\big]}
\label{eq:training loss}
\end{equation}
Note that we may add the MSE loss in \eqref{eq:cont-single-gauss-exact} to the above objective to more accurately estimate the exact variational lower bound. Empirically we find the simplified loss is more computationally efficient, thus we choose to use this loss for most of our experiments unless otherwise specified. 

\paragraph{Sampling}
The sampling start from the last timestep $T$ of the diffusion chain. Under the absorbing forward, $\alpha_T\approx 0$, hence $p(\vx_T)=\delta_{\vx_T=\vm}$, i.e., all tokens are masked.
Since all positions are masked at $T$, the continuous prior is
$p(\vz_T|\vx_T)=\prod_{i=1}^n \mathcal N\big(\vz_T^i;\ \vzero,\ \mI_d\big)$, 
which matches the forward marginal at $T$. For each timestep, given $(\vx_t,\vz_t)$, the network
predicts
$$
\pi_{\theta,i}(v)\ := \frac{1}{K}\sum_{k=1}^K\ p_\theta(\hat \vx_0^i=v\mid \vx_t,\vz_t^{(k)})\in\Delta^{V-1}\qquad\text{for each position }i.
$$
For an unmasked position, the absorbing chain keeps $\vx_{t-1}^i=\vx_t^i$ almost surely and
the continuous variable is deterministic $\vz_{t-1}^i = \vz_t^i = \vw_\theta(\vx_t^i)$. For a masked position, with probability $\frac{1-\alpha_{t-1}}{1-\alpha_t}$, it draws a clean token $v\sim \pi_{\theta,i}(\cdot)$  to unmask it. If this masked position is unmasked in this step, the continuous latent $\vz_{t-1}^i \gets w_{\theta,v}$. If it remains masked, $\vz_{t-1}^i$ moves along the continuous diffusion trajectory $\vz_{t-1}^i\sim \mathcal N\!\Big(\tilde \vmu_t\big(\hat \vz_{0,\theta}^i,\ \vz_t^i\big),\ \tilde\beta_t \mI_d\Big)$ following \eqref{eq:vp-posterior-params}.
The full sampling  process is shown in Algorithm~\ref{alg:cadd-sample}. Note the choice of $\hat \vz_{0,\theta}^i$ has two options:

\begin{equation}
    \textbf{hard: } \ \hat \vz_{0,\theta} = \vw_\theta(\hat \vx_0), \hat \vx_0 = \argmax_v \pi_{\theta,i}(v) \quad \textbf{soft: }  \hat{\vz}_{0,\theta}:=\sum_v \pi_{\theta,i}(v)\,\vw_{\theta, v}.
\label{eq:zhat_choice}
\end{equation}
These two choices are both valid to use depending on whether we are looking for mode-covering or mode-seeking  behavior, i.e., better context localization or better diversity, respectively. In our main experiments we keep the hard option, and our empirical exploration in Appendix~\ref{sec:appendix-ablation} justify these two choices could meet the demand of these two behavior. Moreover, although CADD may leverage multi-sample for the $\vx_0$ distribution estimation, for fair comparison with baselines, we keep $K=1$ for most of our experiments. More detailed studies are also shown in the Appendix~\ref{sec:appendix-ablation}.

\paragraph{Implementation}
We follow the common-used design of the model architecture to let $f_\theta(\cdot)$ predict logits for categorical distribution. The discrete path follows earlier masked-diffusion setups: starting from $\vx_0$, we mask a
subset of positions to obtain $\vx_t$, embed the mixed sequence with the learnable table and form $\vz_\mathrm{disc} = \vw_\theta(\vx_t)$. The only difference is the model needs to take an additional variable $\vz_t$ input for the continuous embeddings. To achieve this, we first form the clean embeddings $\vz_0 = \vw_\theta(\vx_0)$, and then apply noise only at masked positions using the forward marginal \eqref{eq:continuous_marginal_forward} to obtain $\vz_t$. We fuse $\vz_\mathrm{disc}$ and $\vz_t$ by element-wise addition $\tilde{\vz}_t := \vz_{\mathrm{disc}} + \vz_t$,
and feed $\tilde{\vz}_t$ to the backbone $f_\theta$ to produce per-position logits.

\section{Experiments}
In this section we present experiments to validate the proposed CADD model through experiments on text, image, and code generation benchmarks. The evaluations are designed to assess the model's performance across diverse data modalities and scales.

\subsection{Text Generation}

\paragraph{Experiment setting} 
For text generation, we strictly follow the experimental setup of the Masked Diffusion Language Model (MDLM)~\citep{sahoo2024simple}, a common configuration for this task. We train our CADD models on the OpenWebText (OWT) dataset~\citep{Gokaslan2019OpenWeb}. Data is tokenized using the GPT-2 tokenizer with a vocabulary size of $ |\mathcal{V}| =50,257$~\citep{radford2019gpt2}, and sequences are fixed to a length of $n\!=\!1,024$. To be consistent with the baselines, we use a Discrete DiT backbone~\citep{peebles2023dit} with approximately 168M parameters, and  train with same number of iterations. All training hyper-parameters are identical to those in MDLM.

\paragraph{Evaluation.} 
We mainly compare the performance with discrete diffusion baselines in terms of the generative quality, and our evaluation protocol strictly follows that of~\citet{wang2025remasking}. We compare the performance against discrete diffusion baselines using two metrics: the MAUVE score (higher is better)~\citep{liu2021divergence, pillutla2021mauve} and generative perplexity (lower is better)~\citep{lou2024sedd}. Further details on the evaluation setup are located in Appendix~\ref{sec:appendix-exp-settinngs}.

\begin{figure}[t]
  \centering
  \begin{subfigure}[t]{0.495\textwidth}
    \centering
    \includegraphics[width=\linewidth]{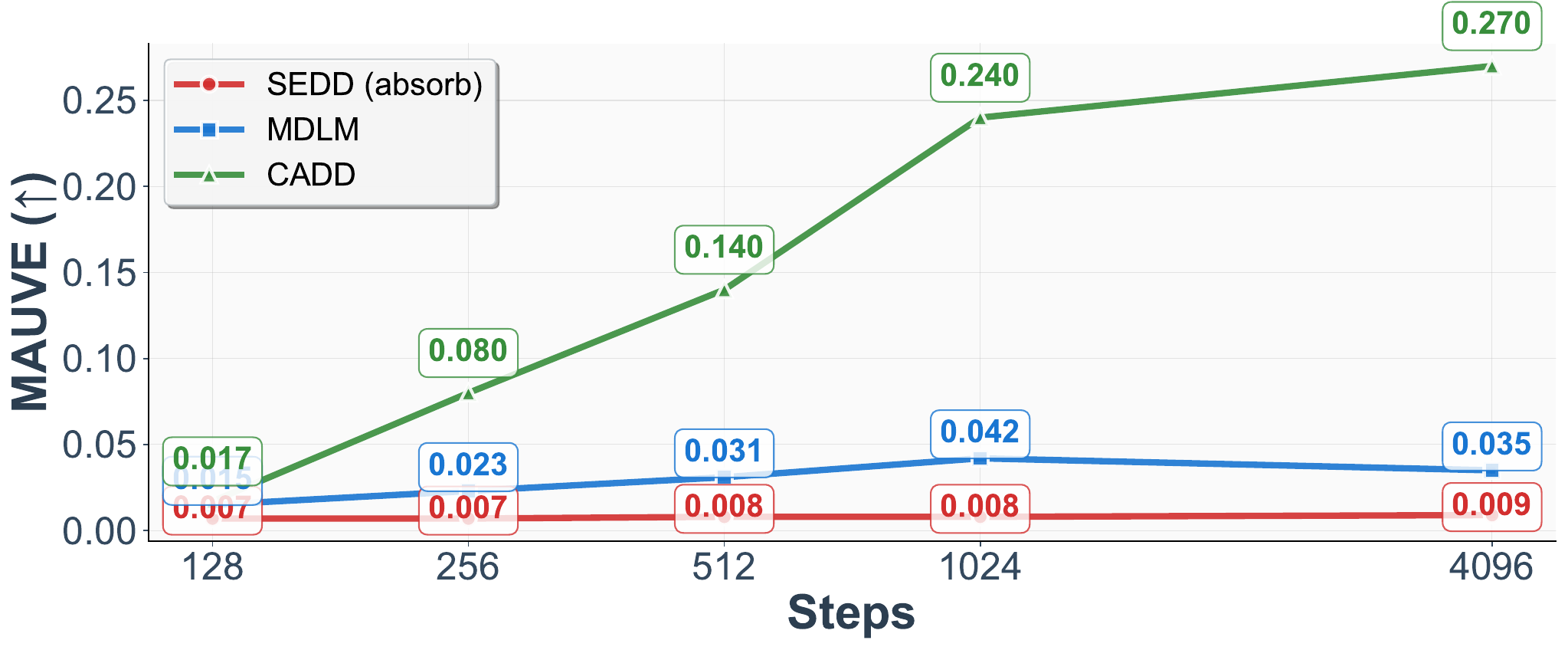}\vspace{-5pt}
  \end{subfigure}\hfill
  \begin{subfigure}[t]{0.495\textwidth}
    \centering
    \includegraphics[width=\linewidth]{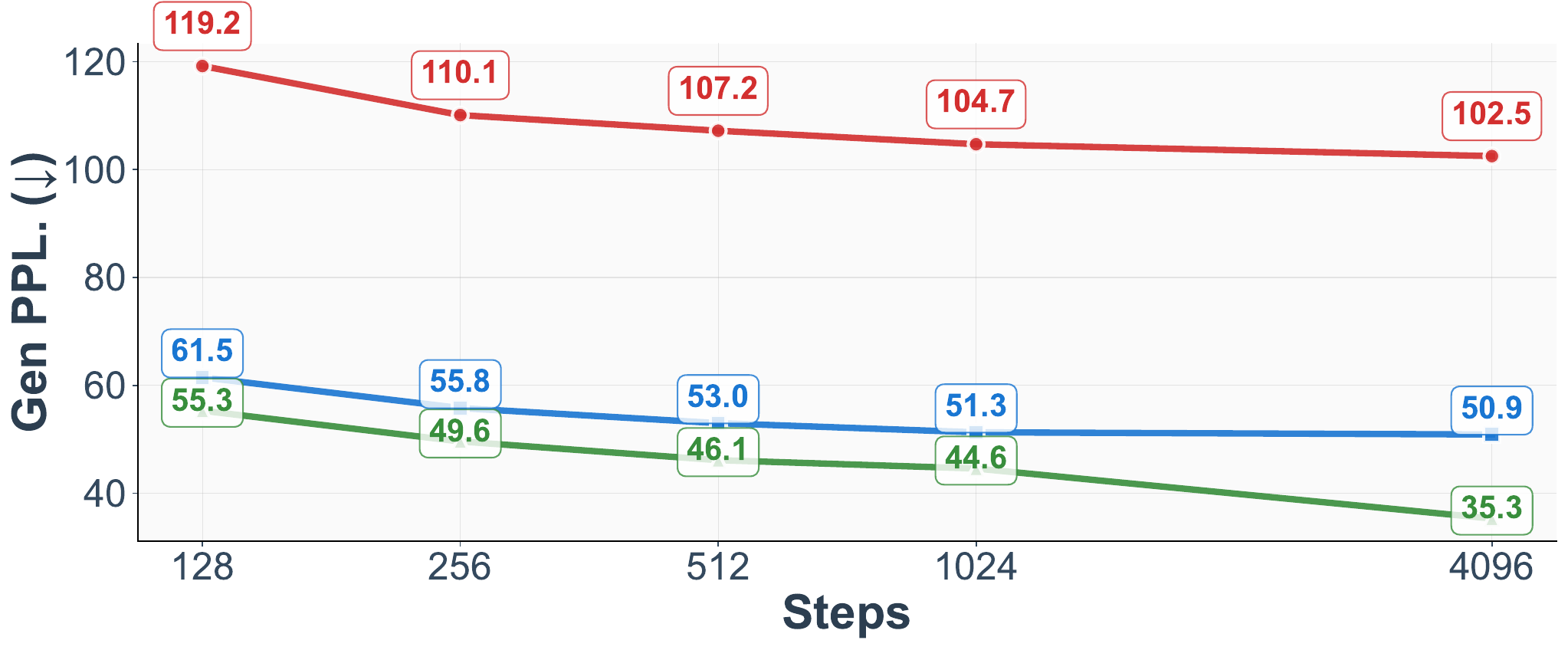}\vspace{-5pt}
  \end{subfigure}\vspace{-2mm}
  \caption{Unconditional text generative evaluation of model trained on OpenWebText (OWT) data. All method are evaluated with 128, 256, 512 1024, and 4096 sampling steps. MAUVE (\textit{Left Panel}, higher is better) and generative perplexity (\textit{Right Panel}, measured using GPT2-Large, lower is better) are reported.}
  \label{fig:owt-main}
\end{figure}

\paragraph{Main Results.} \Figref{fig:owt-main} presents the results for unconditional text generation on the OpenWebText (OWT) dataset, comparing CADD with SEDD (absorb) and MDLM across a range of sampling steps $T\in \{$128, 256, 512, 1024, 4096$\}$.  Within the range $T \leq 1024$, all models show improvement as the number of sampling steps increases. We can notice CADD demonstrates stronger and consistent gains as steps increase compared to SEDD and MDLM in terms of both metrics. Plotting the x-axis on a $\log_2$ scale reveals that the performance trend is approximately linear. 

Extending the sampling process to $T = 4096$ further demonstrates CADD's scaling capabilities at inference time, as it continues to improve while the masked-only baselines stagnate or degrade. From $T=1024$ to $4096$, CADD’s MAUVE score still increases by 0.3, and its generative perplexity is scored from 44.6 to 35.3. MDLM's performance slightly worsens, which is consistent with the observation that mask-only diffusion models scale poorly with $T$~\citep{wang2025remasking}. Overall, CADD consistently show performance gain across all tested number of sampling steps over the mask-only discrete diffusion models, validating the effectiveness of the proposed continuous-augmented space.

\paragraph{Computation}
With our design, the number of trainable parameters in the network is actually the same as MDMs, which is 168M for the used DiT architecture. We also measure the inference time for 5000 samples on 8 H100 GPUs, where both MDLM and CADD take 0.5h. When the number of samples used for $\hat \vz_0$ is 1, i.e., $K=1$, the computation in the network is comparable since we only have extra computation in the forward and the fusion (add) operation. The computation cost increases linearly as $K$ goes greater than 1.

\begin{table}[t]
\centering
\small
\begin{minipage}[t]{0.49\textwidth}

\caption{FID and IS evaluation on CIFAR-10. The arrow symbols denote lower/higher is better respectively. Baseline results are quoted from \citet{chao2025mdmprime}. }
\label{tab:cifar10}
\setlength{\tabcolsep}{1.0mm}{ 
\begin{tabular}{lcc}
\toprule[1.5pt]
\textbf{Method} & FID ($\downarrow$) & IS ($\uparrow$) \\
\midrule
\rowcolor{gray!30} CADD (NFE=512)   & \textbf{2.88} & \textbf{10.04} \\
\midrule
\textcolor{gray}{\textbf{Discrete}} & & \\
MDM (NFE=512)           & 4.66 & 9.09 \\
MDM-Mixture (NFE=512)   & 4.80 & 9.22 \\
{MDM-Prime (NFE=512)} & {3.26} & {9.67} \\
D3PM Absorb (NFE=1,000) & 30.97 & 6.78 \\
D3PM Gauss. (NFE=1,000) & 7.34 & 8.56 \\
CTDD-DG (NFE=1,000) & 7.86 & 8.91 \\
Tau-LDR (NFE=1,000) & 3.74 & 9.49 \\
Discrete FM (NFE=1,024) & 3.63 & - \\
\midrule
\textcolor{gray}{\textbf{Continuous}} & & \\
Continuous FM & 6.35 & - \\
Bit Diffusion & 3.48 & - \\
StyleGAN+ADA & 3.26 & {9.74} \\
DDPM& {3.17} & 9.46 \\
\bottomrule[1.5pt]
\end{tabular}
}
\end{minipage}
\hfill
\begin{minipage}[t]{0.48\textwidth}
\centering
\caption{FID evaluation using model unconditionally trained on ImageNet ($32\times32$ resolution).}
\label{tab:imagenet32}
\setlength{\tabcolsep}{1.0mm}{ 
\begin{tabular}{lc}

\toprule[1.5pt]
\textbf{Method} & FID ($\downarrow$) \\
\midrule
\rowcolor{gray!30} CADD (NFE=1,024)   & \textbf{3.74} \\
\midrule
\textcolor{gray}{\textbf{Discrete}} & \\
MDM (NFE=1,024)           & 7.91 \\
MDM-Mixture (NFE=1,024)   & 8.08 \\
{MDM-Prime (NFE=1,024)} & {6.98}  \\
\midrule
\textcolor{gray}{\textbf{Continuous}} & \\
NDM & 17.02 \\
DDPM & 16.18  \\
MSGAN & 12.30  \\
i-DODE (SP) & 10.31 \\
i-DODE (VP) & 9.09  \\
Stochastic Interp. & 8.49  \\
Soft Trunc. DDPM & 8.42 \\
ScoreFlow (subVP) & 8.87  \\
ScoreFlow (VP) & 8.34  \\
Continuous FM & {5.02}  \\
\bottomrule[1.5pt]
\end{tabular}
}
\end{minipage}
\vspace{-2mm}
\end{table}

\subsection{Image Generation}
We train and evaluate our models on the CIFAR-10~\citep{cifar10} and ImageNet~\citep{krizhevsky2017imagenet} datasets (resolution $32\times 32$). For both, input images are in RGB channels, thus a dimensionality of $n=32\times32\times3$ with $|\mathcal{V}|=256$ pixel values per channel. For fair comparison the MDM baselines, our model architecture follows the one used in \citet{chao2025mdmprime,gat2024discrete}, which is based on the ADM~\citep{dhariwal2021image} architecture. We choose MDM-Prime~\citep{chao2025mdmprime} and its variants as our main discrete diffusion baseline. We also include its discrete and continuous diffusion model baselines for comparison~\citep{shih2022training, ho2020ddpm, song2021scorebased, austin2021d3pm, campbell2022ctmc, gat2024discrete, nisonoff2025unlocking,lipman2022flow, chen2022analog, bartosh2023neural, tran2019msgan, zheng2023ode, albergo2023stochastic, kim2022soft}. To assess sample quality, we report Fréchet Inception Distance (FID) and Inception Score (IS), computed with 50,000 random samples. 

We follow MDM variants to unconditionally sample images with same number of function evaluation (NFE) and report results on CIFAR-10 in Table~\ref{tab:cifar10}. With the same NFE, we can observe CADD improves upon MDMs by a significant margin. Attaining an FID of 2.88 and an Inception Score of 10.04 with 512 function evaluations (NFE), CADD surpasses the MDM variants by 0.38 in terms of FID and represents the best result among all compared method.  On ImageNet-32, as shown in Table~\ref{tab:imagenet32}, the observation is constent, where CADD obtains FID of 3.74 and outperforms all reported baselines. The qualitative generated samples are provided in Appendix~\ref{sec:appendix-gen-results} for visual justifications.

\subsection{Code Generation}

For a large-scale setting, we conduct code generation experiments based on the DiffuCoder pipeline~\citep{gong2025diffucoder}. The DiffuCoder base model training process involves adapting a pretrained autoregressive LLM (e.g., Qwen2.5-coder~\citep{hui2024qwen2}) into a discrete diffusion model by annealing its attention mechanism from causal to bidirectional~\citep{gong2025scaling}. The resulting model is then trained using a masking diffusion loss~\citep{shi2024md4}. In this context, we evaluate our method using the following two distinct configurations. (i) Vanilla CADD: We follow the DiffuCoder procedure to adapt the Qwen2.5-coder model. Instead of using the MDM loss, we train the model from the beginning with our proposed CADD loss. (ii) CADD (fine-tuned): To demonstrate CADD's effectiveness as a fine-tuning objective, we initialize our model from a pretrained DiffuCoder checkpoint and then continue training it with the CADD loss. To ensure a fair comparison, both CADD variants are trained on the same 65B total tokens and use the same training hyperparameters as the original DiffuCoder. In the evaluation, we follow their settings to test the model performance on three coding benchmarks: HumanEval~\citep{chen2021evaluating}, MBPP~\citep{austin2021program}, and BigCodeBench~\citep{zhuo2024bigcodebench}. 

\begin{table}[t]
    \centering
    \small
    \setlength{\tabcolsep}{4pt} 
    \caption{Benchmark coding capacities of AR and Diffusion LLMs in 7/8B scale. We follow the evaluation settings in DiffuCoder~\citep{gong2025diffucoder}, where EvalPlus is computed as the average of HE+ and MBPP+. The best performance in AR and Diffusion LLMs are marked in bold. }
    \setlength{\tabcolsep}{1.0mm}{ 
    \begin{tabular}{ l ll ll c cc c }
        \toprule[1.5pt]
        \multirow{2}{*}{\textbf{Model}}
            & \multicolumn{2}{c}{HumanEval}
            & \multicolumn{2}{c}{MBPP}
            & \multirow{2}{*}{EvalPlus}
            & \multicolumn{2}{c}{BigCodeBench (C)} 
            & \multirow{2}{*}{Avg.}\\
        \cmidrule(lr){2-3} \cmidrule(lr){4-5} \cmidrule(lr){7-8}
            & -   & Plus
            & -   & Plus
            & 
            & \multicolumn{1}{c}{Full} & \multicolumn{1}{c}{Hard} & \\
        \midrule
        \textcolor{gray}{\textbf{AR}} & \multicolumn{8}{l}{} \\
        Qwen2.5-Coder      & 61.6  & 51.8  & 75.9  & 61.4  & 56.6  & \textbf{46.1}  & 16.2  &52.2 \\
        OpenCoder~\citep{huang2024opencoder}      & 66.5  & \textbf{63.4}  & \textbf{79.9}  & \textbf{70.4}  & 66.9  & 40.5  & 9.5  & \textbf{55.0} \\
        \arrayrulecolor{gray}\midrule
        \textcolor{gray}{\textbf{Diffusion}} & \multicolumn{8}{l}{} \\
        LLaDA~\citep{nie2025llmdiff}         & 35.4  & 30.5  & 50.1  & 42.1  & 36.3  & 18.9  & 4.1  & 30.2 \\
        Dream~\citep{ye2025dream}         & 56.7  & 50.0  & 68.7  & 57.4  & 53.7  & 23.6  & 4.1  & 43.4\\
        DiffuCoder            & {67.1}  & {60.4}  & 74.2  & 60.9  & {60.7}  & 40.2  & {12.8}  & {52.6} \\
        \midrule
        \rowcolor{gray!30} CADD (ours)            & {72.0}  & {63.4}  & \textbf{75.7}  & \textbf{63.2}  & \textbf{63.3}  & \textbf{42.1}  & \textbf{17.6}  & \textbf{55.7}\\
        \rowcolor{gray!30} CADD (ours, DiffuCoder init)            & \textbf{73.8}  & \textbf{64.6}  & {73.9}  & {60.4}  & {62.5}  & {41.5}  & {15.5}  & {55.0}\\
        \bottomrule[1.5pt]
    \end{tabular}}
    \label{tab:code-gen}
\end{table}

Table~\ref{tab:code-gen} reports the pass@1 performance, where the results of both autoregressive (AR) and diffusion-based LLMs are included, with an overall average score provided. Compared with Diffusion-based models, CADD emerges as the strongest diffusion model, outperforming competitors on nearly all metrics. Compared to the previous leading DM, DiffuCoder, CADD significantly improves performance on HumanEval, e.g., from 67.1 to 72.0; on the challenging BigCodeBench-Hard subset, we can also observe significant performance gain from 12.8 to 17.6. CADD is also highly competitive with leading AR code models. It surpasses Qwen2.5-Coder across all benchmarks and achieves a higher overall average than OpenCoder (55.7 vs. 55.0). When using Diffucoder's checkpoint as initialization for continuous space finetuning, we also find CADD improves the Diffucoder's performance on HumanEval (73.8 vs. 67.1) and BigCodeBench (41.5 vs. 40.2).

\section{Conclusion}
In standard discrete diffusion, information is lost abruptly when tokens are replaced by an absorbing state. Inspired by Gaussian diffusion, where the data signal degrades smoothly, CADD's core idea is to introduce an auxiliary continuous space to guide the discrete process. This space is designed to retain semantic information, providing a smooth continuous representation of a token even after its discrete form has been absorbed. By conditioning on it, the model can better be aware of what was supposed to be in the masked position. This leads to more coherent and contextually accurate generations, as the model has a stronger grasp of the underlying meaning. With extensive empirical justification on text, image and code generation, we justify that with the continuous augmented space proposed in CADD, the discrete diffusion models consistently generate higher quality samples  across these different tasks and achieve strong performance.

\section*{Acknowledgment}
The authors thank Josh Susskind, Irina Belousova, Miguel Angel Bautista, Richard Bai, Shuangfei Zhai, Tatiana Likhomanenko, Xiaoming Zhao, Yuyang Wang and Zijin Gu for insightful feedbacks and discussions. We also thank Marco Cuturi Cameto and Miguel Angel Bautista for helping setup the template of our arXiv version.

\bibliography{reference}
\bibliographystyle{arxiv}

\clearpage


\appendix

\section{Detailed Derivations and Proof}\label{sec:appendix-elbo}
\subsection{ELBO Derivation}
\textbf{Forward chain.}
For any observation $\vx_0$, the forward diffusion constructs as
\begin{equation}
q(\vx_{1:T},\vz_{1:T} \mid \vx_0)
= \prod_{t=1}^{T} q_t\!\big(\vx_t,\vz_t \mid \vx_{t-1},\vz_{t-1},\vx_0\big),
\label{eq:q-forward}
\end{equation}
note we represent $\big(\vx_0,\vz_0\big)$ as $\vx_0$ since the transform $\vw_\theta$ is deterministic. 

\textbf{Reverse generative model.}
\begin{equation}
p_\theta(\vx_0, \vx_{1:T}, \vz_{1:T})
= p_T(\vx_T,\vz_T)\;
\Big[\prod_{t=2}^{T} p_\theta\!\big(\vx_{t-1},\vz_{t-1}\mid \vx_t,\vz_t\big)\Big]\;
p_\theta(\vx_0\mid \vx_1,\vz_1).
\label{eq:p-reverse}
\end{equation}


\begin{proposition}[ELBO decomposition]
\label{prop:elbo}
Given the forward chain $q$ defined in \eqref{eq:q-forward} and reverse model $p_\theta$ in \eqref{eq:p-reverse}, we have the decomposed ELBO as following: 
\begin{align}
\log p_\theta(\vx_0)
&\ge
\underbrace{\E_{q(\vx_1,\vz_1\mid \vx_0)}\!\big[\log p_\theta(\vx_0\mid \vx_1,\vz_1)\big]}_{\text{reconstruction term at }t=1}\nonumber\\
&\quad
-\underbrace{\sum_{t=2}^{T}\E_{q(\vx_t,\vz_t\mid \vx_0)}
\Big[ \KL\!\big(q(\vx_{t-1},\vz_{t-1}\mid \vx_t,\vz_t,\vx_0)\,\big\|\,p_\theta(\vx_{t-1},\vz_{t-1}\mid \vx_t,\vz_t)\big)\Big]}_{\text{denoising matches for }t>1}\nonumber\\
&\quad
-\underbrace{\KL\!\big(q(\vx_T,\vz_T\mid \vx_0)\,\|\,p_T(\vx_T,\vz_T)\big)}_{\text{prior match at }T}.
\label{eq:main-elbo}
\end{align}
If $q(\vx_T,\vz_T\mid \vx_0)=p_T(\vx_T,\vz_T)$ for all $\vx_0$, then the prior match term is zero. The bound is tight if and only if
\[
p_\theta(\vx_{t-1},\vz_{t-1}\mid \vx_t,\vz_t)
= q(\vx_{t-1},\vz_{t-1}\mid \vx_t,\vz_t,\vx_0)\quad\text{for all }t\ge 2,
\]
and the prior match is zero, and the decoder $p_\theta(\vx_0\mid \vx_1,\vz_1)$ equals the true conditional induced by the joint.
\end{proposition}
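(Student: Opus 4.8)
The plan is the standard Jensen-plus-telescoping derivation of the diffusion ELBO, with the one twist that in CADD the forward kernel $q_t(\vx_t,\vz_t\mid\vx_{t-1},\vz_{t-1},\vx_0)$ is additionally conditioned on $\vx_0$. First I would note that, conditioned on $\vx_0$ (and recalling $\vz_0=\vw_\theta(\vx_0)$ is a deterministic function of $\vx_0$, so the $t=1$ kernel is simply $q(\vx_1,\vz_1\mid\vx_0)$), the sequence $(\vx_t,\vz_t)_{t\ge1}$ is Markov, hence Bayes' rule gives, for every $t\ge2$,
\[
q_t(\vx_t,\vz_t\mid\vx_{t-1},\vz_{t-1},\vx_0)
=\frac{q(\vx_{t-1},\vz_{t-1}\mid\vx_t,\vz_t,\vx_0)\;q(\vx_t,\vz_t\mid\vx_0)}{q(\vx_{t-1},\vz_{t-1}\mid\vx_0)},
\]
where the posterior $q(\vx_{t-1},\vz_{t-1}\mid\vx_t,\vz_t,\vx_0)$ is exactly the one with closed form in Proposition~\ref{prop:posterior-factorization}. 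Substituting this into \eqref{eq:q-forward} and telescoping $\prod_{t=2}^T q(\vx_t,\vz_t\mid\vx_0)/q(\vx_{t-1},\vz_{t-1}\mid\vx_0)=q(\vx_T,\vz_T\mid\vx_0)/q(\vx_1,\vz_1\mid\vx_0)$ yields
\[
q(\vx_{1:T},\vz_{1:T}\mid\vx_0)=q(\vx_T,\vz_T\mid\vx_0)\,\prod_{t=2}^{T} q(\vx_{t-1},\vz_{t-1}\mid\vx_t,\vz_t,\vx_0).
\]

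Next I would write $\log p_\theta(\vx_0)=\log\sum_{\vx_{1:T}}\!\int p_\theta(\vx_0,\vx_{1:T},\vz_{1:T})\,d\vz_{1:T}$ and apply Jensen's inequality with the importance distribution $q(\vx_{1:T},\vz_{1:T}\mid\vx_0)$, giving $\log p_\theta(\vx_0)\ge\E_q\!\big[\log\{p_\theta(\vx_0,\vx_{1:T},\vz_{1:T})/q(\vx_{1:T},\vz_{1:T}\mid\vx_0)\}\big]$. Plugging the reverse factorization \eqref{eq:p-reverse} into the numerator and the telescoped forward factorization above into the denominator, the log-ratio collapses to
\[
\log p_\theta(\vx_0\mid\vx_1,\vz_1)+\sum_{t=2}^{T}\log\frac{p_\theta(\vx_{t-1},\vz_{t-1}\mid\vx_t,\vz_t)}{q(\vx_{t-1},\vz_{t-1}\mid\vx_t,\vz_t,\vx_0)}+\log\frac{p_T(\vx_T,\vz_T)}{q(\vx_T,\vz_T\mid\vx_0)}.
\]
Taking $\E_q[\cdot]$ and using the tower property to marginalize each summand down to the variables it actually depends on converts the $t$-th term into $-\E_{q(\vx_t,\vz_t\mid\vx_0)}\big[\KL(q(\vx_{t-1},\vz_{t-1}\mid\vx_t,\vz_t,\vx_0)\,\|\,p_\theta(\vx_{t-1},\vz_{t-1}\mid\vx_t,\vz_t))\big]$ and the last term into $-\KL(q(\vx_T,\vz_T\mid\vx_0)\,\|\,p_T(\vx_T,\vz_T))$; this is exactly \eqref{eq:main-elbo}. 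The assertion that the prior match vanishes when $q(\vx_T,\vz_T\mid\vx_0)=p_T(\vx_T,\vz_T)$ is then immediate from $\KL(\mu\|\mu)=0$.

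For tightness I would use the exact identity $\log p_\theta(\vx_0)-\text{ELBO}=\KL\big(q(\vx_{1:T},\vz_{1:T}\mid\vx_0)\,\big\|\,p_\theta(\vx_{1:T},\vz_{1:T}\mid\vx_0)\big)\ge0$, where $p_\theta(\vx_{1:T},\vz_{1:T}\mid\vx_0)=p_\theta(\vx_0,\vx_{1:T},\vz_{1:T})/p_\theta(\vx_0)$; hence the bound is tight iff this path-level KL is zero, i.e.\ iff the forward path distribution and the reverse path posterior coincide $q$-almost everywhere. Substituting the reverse factorization \eqref{eq:p-reverse} together with the telescoped forward factorization into this equality and matching it layer by layer — the marginal at $T$, each reverse transition for $t\ge2$, and the $t=1$ decoder (whose extra $p_\theta(\vx_0\mid\vx_1,\vz_1)/p_\theta(\vx_0)$ factor is exactly what the decoder-matching condition pins down) — shows that coincidence of the two path distributions is equivalent to the conjunction of: (i) $p_\theta(\vx_{t-1},\vz_{t-1}\mid\vx_t,\vz_t)=q(\vx_{t-1},\vz_{t-1}\mid\vx_t,\vz_t,\vx_0)$ for all $t\ge2$ on the support of the forward chain from $\vx_0$; (ii) $q(\vx_T,\vz_T\mid\vx_0)=p_T(\vx_T,\vz_T)$; and (iii) $p_\theta(\vx_0\mid\vx_1,\vz_1)$ equal to the conditional of $\vx_0$ induced by the joint. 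This is the stated characterization.

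I expect the main obstacle to be the bookkeeping around the $\vx_0$-dependence of the forward kernel: one must verify that $(\vx_t,\vz_t)_{t\ge1}$ is genuinely Markov given $\vx_0$ — so that the Bayes step producing $q(\vx_{t-1},\vz_{t-1}\mid\vx_t,\vz_t,\vx_0)$ is legitimate — and treat the $t=1$ boundary where $\vz_0=\vw_\theta(\vx_0)$ collapses onto $\vx_0$, as well as the fact that the $\vx_0$-dependent posteriors can match the $\vx_0$-free reverse transitions only on the support reachable from the given $\vx_0$. These are precisely the places where CADD departs from the textbook Gaussian/absorbing ELBO; the remaining manipulations — Jensen, substituting the two factorizations, telescoping, and converting expected log-ratios into KL divergences via the tower property — are routine, and Propositions~\ref{prop:factorization-vector} and~\ref{prop:posterior-factorization} guarantee every conditional appearing above is well-defined with a closed form.
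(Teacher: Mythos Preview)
Your proposal is correct and follows essentially the same Jensen-plus-Bayes-plus-telescoping argument as the paper; the only cosmetic difference is that you telescope $\prod_{t=2}^T q(\vx_t,\vz_t\mid\vx_0)/q(\vx_{t-1},\vz_{t-1}\mid\vx_0)$ at the level of the path distribution \emph{before} applying Jensen, whereas the paper takes expectations first and telescopes the terms $\E_q[\log q(\vx_t,\vz_t\mid\vx_0)]$ afterwards. You also supply the tightness argument via the path-level identity $\log p_\theta(\vx_0)-\mathrm{ELBO}=\KL\big(q(\vx_{1:T},\vz_{1:T}\mid\vx_0)\,\|\,p_\theta(\vx_{1:T},\vz_{1:T}\mid\vx_0)\big)$, which the paper states in the proposition but does not prove.
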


Recap the forward kernel defined in \eqref{eq:def-discrete} and \eqref{eq:def-continuous}:

\begin{equation}
q(\vx_t\mid \vx_{t-1})= \prod_{i=1}^n \Cat \big(\vx_t^i;\ \mQ_t^\top \vx_{t-1}^i\big), ~~~ \mQ_t=(1-\beta_t)\mI+\beta_t\,\mathbf 1\,\bm m^\top.
\notag
\end{equation}
\begin{equation}
q(\vz_t\mid \vz_{t-1}, \vx_{t-1},\vx_t, \vx_0)
=\prod_{i=1}^n
\begin{cases}
\delta(\vz_t^i-\vz_{t-1}^i), & \vx_t^i\neq\vm,\\
\mathcal N\!\big(\vz_t^i;\ \sqrt{\bar \gamma_t}\,\vz_{t-1}^i,\ (1-\bar\gamma_t)\mI_d\big), & \vx_t^i=\vm, \vx_{t-1}^i\neq\vm,\\
\mathcal N\!\big(\vz_t^i;\ \sqrt{\gamma_t}\,\vz_{t-1}^i,\ (1-\gamma_t)\mI_d\big), & \vx_t^i=\vm, \vx_{t-1}^i=\vm.
\end{cases}
\notag
\end{equation}

\begin{proof}[Proof of Proposition~\ref{prop:elbo}]
The proof is mostly done in \citet{dickstein2015diffusion} and \citet{ho2020ddpm}. We include the following proof to show the generalized version with added variables. Start from the evidence identity and apply Jensen inequality:
\begin{align}
\log p_\theta(\vx_0)
&=\log \int q(\vx_{1:T},\vz_{1:T}\mid \vx_0)\,
\frac{p_\theta(\vx_0,\vx_{1:T},\vz_{1:T})}{q(\vx_{1:T},\vz_{1:T}\mid \vx_0)}\;d\vx_{1:T}\,d\vz_{1:T}\nonumber\\
&\ge
\E_{q(\vx_{1:T},\vz_{1:T}\mid \vx_0)}\!\Big[
\log p_\theta(\vx_0,\vx_{1:T},\vz_{1:T})-\log q(\vx_{1:T},\vz_{1:T}\mid \vx_0)
\Big]\nonumber\\
&=: \gL(\theta;\vx_0).
\label{eq:elbo-jensen}
\end{align}
Insert the model and forward factorizations \eqref{eq:p-reverse} and \eqref{eq:q-forward}:
\begin{align}
\gL(\theta;\vx_0)
&=\E_q\Big[
\log p_T(\vx_T,\vz_T)
+\sum_{t=2}^T \log p_\theta(\vx_{t-1},\vz_{t-1}\mid \vx_t,\vz_t) \\
&+\log p_\theta(\vx_0\mid \vx_1,\vz_1)
-\sum_{t=1}^T \log q(\vx_t,\vz_t\mid \vx_{t-1},\vz_{t-1},\vx_0)\Big].
\label{eq:elbo-expanded}
\end{align}
For each $t\ge 2$ use Bayes’ rule under $q$:
\begin{equation}
q(\vx_t,\vz_t\mid \vx_{t-1},\vz_{t-1},\vx_0)
=\frac{q(\vx_{t-1},\vz_{t-1}\mid \vx_t,\vz_t,\vx_0)\;q(\vx_t,\vz_t\mid \vx_0)}{q(\vx_{t-1},\vz_{t-1}\mid \vx_0)}.
\label{eq:bayes-q}
\end{equation}
Taking $\E_q[\log(\cdot)]$ of \eqref{eq:bayes-q} and rearranging gives, for $t\ge 2$,
\begin{align}
&\E_q\!\Big[\log p_\theta(\vx_{t-1},\vz_{t-1}\mid \vx_t,\vz_t)
-\log q(\vx_t,\vz_t\mid \vx_{t-1},\vz_{t-1},\vx_0)\Big]\nonumber\\
&\qquad=
-\E_{q(\vx_t,\vz_t\mid \vx_0)}\!\Big[
\KL\!\big(q(\vx_{t-1},\vz_{t-1}\mid \vx_t,\vz_t,\vx_0)\ \|\ p_\theta(\vx_{t-1},\vz_{t-1}\mid \vx_t,\vz_t)\big)
\Big]\nonumber\\
&\qquad\quad
-\E_q\!\big[\log q(\vx_t,\vz_t\mid \vx_0)\big]
+\E_q\!\big[\log q(\vx_{t-1},\vz_{t-1}\mid \vx_0)\big].
\label{eq:one-step-id}
\end{align}
Sum \eqref{eq:one-step-id} over $t=2,\dots,T$. The last two expectations telescope:
\begin{align}
-\sum_{t=2}^T\E_q\!\big[\log q(\vx_t,\vz_t\mid \vx_0)\big]
&+\sum_{t=2}^T\E_q\!\big[\log q(\vx_{t-1},\vz_{t-1}\mid \vx_0)\big]\nonumber\\
= \E_q\!\big[\log q(\vx_1,\vz_1\mid \vx_0)\big]-&\E_q\!\big[\log q(\vx_T,\vz_T\mid \vx_0)\big].
\end{align}
Plug this back into \eqref{eq:elbo-expanded} and group the boundary terms with $\log p_T$:
\begin{align}
\gL(\theta;\vx_0)
&=\E_q\big[\log p_\theta(\vx_0\mid \vx_1,\vz_1)\big]\nonumber\\
&\quad-\sum_{t=2}^T \E_{q(\vx_t,\vz_t\mid \vx_0)}\!\Big[
\KL\!\big(q(\vx_{t-1},\vz_{t-1}\mid \vx_t,\vz_t,\vx_0)\ \|\ p_\theta(\vx_{t-1},\vz_{t-1}\mid \vx_t,\vz_t)\big)
\Big]\nonumber\\
&\quad-\Big(\E_q\!\big[\log q(\vx_T,\vz_T\mid \vx_0)\big]-\E_q\!\big[\log p_T(\vx_T,\vz_T)\big]\Big)\nonumber\\
&\quad-\E_q\!\big[\log q(\vx_1,\vz_1\mid \vx_0)\big].
\label{eq:elbo-with-const}
\end{align}
Now we recoginize the prior KL to obtain
\begin{align}
\gL(\theta;\vx_0)
&=\E_q\big[\log p_\theta(\vx_0\mid \vx_1,\vz_1)\big]\nonumber\\
&\quad-\sum_{t=2}^T \E_{q(\vx_t,\vz_t\mid \vx_0)}\!\Big[
\KL\!\big(q(\vx_{t-1},\vz_{t-1}\mid \vx_t,\vz_t,\vx_0)\ \|\ p_\theta(\vx_{t-1},\vz_{t-1}\mid \vx_t,\vz_t)\big)\Big]\nonumber\\
&\quad-\KL\!\big(q(\vx_T,\vz_T\mid \vx_0)\ \|\ p_T(\vx_T,\vz_T)\big)\;-\;\underbrace{\E_q\!\big[\log q(\vx_1,\vz_1\mid \vx_0)\big]}_{=:C(\vx_0)}.
\label{eq:elbo-final-const}
\end{align}
Note the last term $C(\vx_0)$ does not involve $p_\theta$ and can be dropped, and we normally do not optimize the last KL term $\KL\!\big(q(\vx_T,\vz_T\mid \vx_0)\ \|\ p_T(\vx_T,\vz_T)\big)$ as we let the schedule to make this statistical distance is sufficiently small.
\end{proof}

\subsection{Forward}
We can derive the following lemma for the marginal at time step $t$.

\begin{lemma}[Continuous marginal conditioned on $(\vx_t,\vx_0)$]
\label{lem:zt-marginal-vector}
Recap $\bar\gamma_t:=\prod_{s=1}^t\gamma_s$. For each position $i$, we have continuous marginal conditioned on $(\vx_t,\vx_0)$ as
\begin{equation}
q(\vz_t^i\mid \vx_t^i,\vx_0^i)=
\begin{cases}
\delta(\vz_t^i-\vz_0^i), & \vx_t^i=\vx_0^i,\\
\mathcal N\!\big(\vz_t^i;\ \sqrt{\bar\gamma_t}\,\vz_0^i,\ (1-\bar\gamma_t)\mI_d\big), & \vx_t^i=\vm,
\end{cases}
\notag
\end{equation}
with $\vz_0^i=\vw_\theta(\vx_0^i)$. Hence We finally have
\begin{equation}
q(\vz_t\mid \vx_t,\vx_0)
=\prod_{i=1}^n q(\vz_t^i\mid \vx_t^i,\vx_0^i)
=\Big[\!\!\!\prod_{i:\,\vx_t^i\neq\vm}\!\!\!\delta(\vz_t^i-\vz_0^i)\Big]
\cdot
\Big[\!\!\!\prod_{i:\,\vx_t^i=\vm}\!\!\!\mathcal N(\vz_t^i;\sqrt{\bar\gamma_t}\vz_0^i,(1-\bar\gamma_t)\mI_d)\Big].
\notag
\end{equation}
\end{lemma}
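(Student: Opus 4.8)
The plan is to compute the marginal one position at a time by integrating out the intermediate continuous variables along the forward chain, using two structural facts: the discrete chain is absorbing, and once a token is masked its continuous companion evolves by a chain of affine--Gaussian maps. Fix a position $i$, drop the superscript, and condition throughout on $\vx_0$ (equivalently on $\vz_0 = \vw_\theta(\vx_0)$). Since every one-step kernel in \eqref{eq:def-discrete}--\eqref{eq:def-continuous} factorizes over positions, the whole forward law does too, so it is enough to handle a single coordinate and then take a product over $i$ at the end.

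First I would record the shape of the discrete path. Because $\mQ_t = (1-\beta_t)\mI + \beta_t\,\mathbf 1\,\vm^\top$ is absorbing at $\vm$, the path $\vx_{1:t}$ is determined by a single first-masking time $\tau$, where $\tau > t$ means ``never masked through step $t$''. The event $\{\vx_t = \vx_0\}$ is exactly $\{\tau > t\}$ and $\{\vx_t = \vm\}$ is exactly $\{\tau \le t\}$, so the two cases in the statement are exhaustive (with masses $\alpha_t$ and $1-\alpha_t$, matching \eqref{eq:def-discrete-marginal}). In the case $\vx_t = \vx_0$ the token is unmasked at every step $s \le t$, so by the first branch of \eqref{eq:def-continuous} each continuous transition is $\delta(\vz_s - \vz_{s-1})$; chaining them gives $\vz_t = \vz_0$, i.e.\ the delta claim.

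The substantive case is $\vx_t = \vm$. Here I would condition further on $\tau \le t$: for $s < \tau$ we again get $\delta(\vz_s - \vz_{s-1})$, so $\vz_{\tau-1} = \vz_0$; at $s = \tau$ the ``first-mask'' branch of \eqref{eq:def-continuous} gives $\vz_\tau \sim \mathcal N(\sqrt{\bar\gamma_\tau}\,\vz_0,\ (1-\bar\gamma_\tau)\mI_d)$; and for $\tau < s \le t$ the ``stays-masked'' branch gives $\vz_s = \sqrt{\gamma_s}\,\vz_{s-1} + \sqrt{1-\gamma_s}\,\veps_s$ with independent standard Gaussian $\veps_s$. Then I would show by induction on $s \ge \tau$ that $q(\vz_s \mid \tau, \vx_0) = \mathcal N(\sqrt{\bar\gamma_s}\,\vz_0,\ (1-\bar\gamma_s)\mI_d)$: the base case $s = \tau$ is immediate, and the inductive step is the standard composition of a Gaussian with an affine map, using $\bar\gamma_s = \gamma_s\,\bar\gamma_{s-1}$ so that the mean scales to $\sqrt{\gamma_s}\sqrt{\bar\gamma_{s-1}}\,\vz_0 = \sqrt{\bar\gamma_s}\,\vz_0$ and the variance becomes $\gamma_s(1-\bar\gamma_{s-1}) + (1-\gamma_s) = 1-\bar\gamma_s$. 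Evaluating at $s=t$ yields $\mathcal N(\sqrt{\bar\gamma_t}\,\vz_0,\ (1-\bar\gamma_t)\mI_d)$, which does not depend on $\tau$, so averaging over the conditional law of $\tau$ given $\{\vx_t = \vm\}$ returns the same Gaussian; this is $q(\vz_t \mid \vx_t = \vm, \vx_0)$.

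Finally I would assemble the per-position result from the two cases and multiply over $i \in \{1,\dots,n\}$, invoking the position-wise factorization noted above (and Proposition~\ref{prop:factorization-vector}) to land on the stated product form. The main obstacle --- really the only non-bookkeeping step --- is the induction in the masked case: one has to check that the telescoped mean and variance recombine to $\sqrt{\bar\gamma_t}\,\vz_0$ and $1-\bar\gamma_t$, and, just as importantly, that the answer is independent of the masking time $\tau$. That independence is exactly what makes conditioning on $(\vx_t,\vx_0)$ alone (rather than on the full trajectory) sufficient, and it is the reason the marginal takes such a clean closed form.
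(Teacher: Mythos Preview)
Your argument is correct and matches the paper's in substance: both establish the masked case by showing that the chain of affine--Gaussian kernels telescopes to $\mathcal N(\sqrt{\bar\gamma_t}\,\vz_0,\ (1-\bar\gamma_t)\mI_d)$ irrespective of when masking first happened. The organization differs slightly: the paper runs a one-step-backward induction on $t$, splitting over whether $\vx_{t-1}$ is already masked (both branches land on the same Gaussian, which is the inductive step), whereas you condition on the first-masking time $\tau$ and then induct forward from $\tau$ to $t$, making the $\tau$-independence explicit before averaging it out. One small caveat: your parenthetical appeal to Proposition~\ref{prop:factorization-vector} at the end is circular, since in the paper that proposition is proved \emph{using} this lemma; but you do not actually need it, because the position-wise product form already follows directly from the per-position factorization of the forward kernels in \eqref{eq:def-discrete}--\eqref{eq:def-continuous}, which you correctly invoke at the start.
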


Then what follows proves Proposition~\ref{prop:posterior-factorization}. We first prove the conditional independency between $\vz_t$ and $\vx_{t-1}$ given $(\vx_t, \vx_0)$ in the reverse context.
\begin{lemma}[Conditional independency between $\vz_t$ and $\vx_{t-1}$ given $(\vx_t, \vx_0)$]\label{lemma:cond-indp}
    $\vz_t$ and $\vx_{t-1}$ are conditionally independent given $(\vx_t, \vx_0)$ based on the forward kerned defined in \eqref{eq:def-continuous}. 
\end{lemma}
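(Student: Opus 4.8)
\textbf{Proof plan for Lemma~\ref{lemma:cond-indp}.}
The plan is to show that, conditioned on $(\vx_t,\vx_0)$, the random variables $\vz_t$ and $\vx_{t-1}$ factor in the joint forward law, i.e.\ $q(\vz_t,\vx_{t-1}\mid \vx_t,\vx_0) = q(\vz_t\mid \vx_t,\vx_0)\,q(\vx_{t-1}\mid \vx_t,\vx_0)$. Since everything factorizes across positions $i$, I would fix one position and drop the superscript, working with the scalar/$d$-vector quantities $(\vx_{t-1},\vx_t,\vz_{t-1},\vz_t,\vx_0)$. The idea is to write out the joint $q(\vx_{t-1},\vz_{t-1},\vz_t\mid \vx_t,\vx_0)$ by Bayes' rule from the forward kernels and then marginalize out $\vz_{t-1}$.

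First I would split into the three mutually exclusive cases that govern the continuous kernel in~\eqref{eq:def-continuous}: (a) $\vx_t\neq\vm$; (b) $\vx_t=\vm$ and $\vx_{t-1}\neq\vm$ (first masking at step $t$); and (c) $\vx_t=\vm$ and $\vx_{t-1}=\vm$. In case (a), the discrete chain forces $\vx_{t-1}=\vx_t=\vx_0$ (the absorbing chain never unmasks), and the continuous kernel is $\delta(\vz_t-\vz_{t-1})$ with $\vz_{t-1}=\vz_0$ deterministically by Lemma~\ref{lem:zt-marginal-vector}; so $\vz_t=\vz_0$ is a constant and trivially independent of the (also deterministic) $\vx_{t-1}$. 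In case (b), $\vx_{t-1}$ is determined to be $\vx_0$ by the discrete posterior when $\vx_t=\vm$ is reached for the first time at step $t$ from an unmasked state, $\vz_{t-1}=\vz_0$ deterministically, and $\vz_t\sim\mathcal N(\sqrt{\bar\gamma_t}\vz_0,(1-\bar\gamma_t)\mI_d)$ depends only on $\vz_0=\vw_\theta(\vx_0)$; again there is no residual dependence on $\vx_{t-1}$ beyond what $(\vx_t,\vx_0)$ already pins down. In case (c), I would note that $\vz_t$ given $\vz_{t-1}$ follows $\mathcal N(\sqrt{\gamma_t}\vz_{t-1},(1-\gamma_t)\mI_d)$ and is conditionally independent of $\vx_{t-1}$ given $\vz_{t-1}$; marginalizing over $\vz_{t-1}\sim q(\vz_{t-1}\mid \vx_{t-1}{=}\vm,\vx_0)$ — which by Lemma~\ref{lem:zt-marginal-vector} is $\mathcal N(\sqrt{\bar\gamma_{t-1}}\vz_0,(1-\bar\gamma_{t-1})\mI_d)$, a law that does \emph{not} depend on $\vx_{t-1}$ once we know $\vx_{t-1}=\vm$ — yields $q(\vz_t\mid \vx_t{=}\vm,\vx_{t-1}{=}\vm,\vx_0)=\mathcal N(\sqrt{\bar\gamma_t}\vz_0,(1-\bar\gamma_t)\mI_d)$, identical to $q(\vz_t\mid \vx_t{=}\vm,\vx_0)$. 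Thus in each case the conditional law of $\vz_t$ given $(\vx_t,\vx_{t-1},\vx_0)$ does not actually depend on $\vx_{t-1}$, which is exactly conditional independence.

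A cleaner, unified way I would phrase this: the continuous marginal $q(\vz_t\mid \vx_t,\vx_0)$ of Lemma~\ref{lem:zt-marginal-vector} is a deterministic function of $(\vx_t,\vx_0)$ alone, and the Gaussian chain collapses consistently because $\bar\gamma_t=\gamma_t\,\bar\gamma_{t-1}$ and the variances compose as $(1-\bar\gamma_t)=\gamma_t(1-\bar\gamma_{t-1})+(1-\gamma_t)$ — the standard ``closure under convolution'' identity for variance-preserving Gaussian diffusion. The key point to verify carefully is that $\vx_{t-1}$ enters the generative story for $\vz_t$ \emph{only} through the discrete event ``masked vs.\ not masked at $t-1$,'' and that event is already determined (almost surely) by $(\vx_t,\vx_0)$ via the discrete posterior in~\eqref{eq:disc-post-weights}: if $\vx_t\neq\vm$ then $\vx_{t-1}=\vx_0\neq\vm$; if $\vx_t=\vm$ then $\vx_{t-1}\in\{\vm,\vx_0\}$ and in \emph{both} sub-cases the resulting marginal law of $\vz_t$ equals $\mathcal N(\sqrt{\bar\gamma_t}\vz_0,(1-\bar\gamma_t)\mI_d)$, so it is constant in $\vx_{t-1}$.

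\textbf{Main obstacle.} The routine Gaussian algebra is easy; the subtle step is case (c), where one must be careful that $\vz_{t-1}$, when marginalized, should be drawn from $q(\vz_{t-1}\mid \vx_{t-1}{=}\vm,\vx_0)$ rather than from something that also conditions on $\vz_t$ or on the full history — i.e.\ one needs the forward (not posterior) marginal of $\vz_{t-1}$ here, and one must check that the event $\{\vx_{t-1}=\vm\}$ is compatible with $\{\vx_t=\vm\}$ with positive probability (it is, with weight $\rho_t^{\mathrm{keep}}$) so the conditioning is well-defined. The other mild care point is handling the Dirac-mass cases (a)–(b) rigorously alongside the absolutely-continuous case (c); I would treat the deterministic cases as degenerate Gaussians or simply argue pointwise that the conditional density/mass factorizes, so no measure-theoretic fuss is needed.
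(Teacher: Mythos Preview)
Your proposal is correct and follows essentially the same approach as the paper: a case split on $\vx_t\in\{\vx_0,\vm\}$ and, when $\vx_t=\vm$, on $\vx_{t-1}\in\{\vx_0,\vm\}$, showing in each subcase that the resulting law of $\vz_t$ is $\mathcal N(\sqrt{\bar\gamma_t}\vz_0,(1-\bar\gamma_t)\mI_d)$ (or the degenerate $\delta(\vz_t-\vz_0)$ when unmasked), hence independent of $\vx_{t-1}$. The only structural difference is that the paper proves Lemma~\ref{lem:zt-marginal-vector} and Lemma~\ref{lemma:cond-indp} jointly in a single case analysis, whereas you invoke Lemma~\ref{lem:zt-marginal-vector} at time $t-1$ as an input; this is logically equivalent and not a genuine difference in method.
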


To prove Proposition~\ref{prop:factorization-vector}, we first prove the following lemma:

\begin{proof}[Proof of Lemma~\ref{lem:zt-marginal-vector} and Lemma~\ref{lemma:cond-indp}]
If $\vx_t^i=\vx_0^i$ then the absorbing chain implies $\vx_s^i\neq \vm$ for $s\le t$, so the kernel gives
$\vz_t^i=\vz_0^i$ almost surely, which is the first line of \eqref{eq:def-continuous}.

If $\vx_t^i=\vm$, use the law of total probability over $\vx_{t-1}^i\in\{\vx_0^i,\vm\}$.

When $\vx_{t-1}^i\neq \vm$ (first time masking at $t$), the second branch of the kernel gives
$\vz_t^i\sim \mathcal N(\sqrt{\bar\gamma_t}\,\vz_0^i,(1-\bar\gamma_t)\mI)$.

When $\vx_{t-1}^i=\vm$ (already masked), the third branch composes a diffusion forward step with the
previous marginal $\vz_{t-1}^i\sim \mathcal N(\sqrt{\bar\gamma_{t-1}}\,\vz_0^i,(1-\bar\gamma_{t-1})\mI)$,
which yields
\[
\vz_t^i\ \sim\ \mathcal N\!\big(\sqrt{\gamma_t\bar\gamma_{t-1}}\,\vz_0^i,\ (1-\gamma_t\bar\gamma_{t-1})\mI\big)
=\mathcal N\!\big(\sqrt{\bar\gamma_t}\,\vz_0^i,\ (1-\bar\gamma_t)\mI\big).
\]
This proves the masked line of \eqref{eq:def-continuous}.
\end{proof}

Then leveraging these results, we can easily prove Proposition~\ref{prop:factorization-vector}.

\begin{proof}[Proof of Proposition~\ref{prop:factorization-vector}]
Expand the path marginal, use \eqref{eq:def-onestep} and Lemma~\ref{lem:zt-marginal-vector}, and factor over positions.
The sum over discrete paths yields $q(\vx_t\mid \vx_0)$; conditioning on $\vx_t$ reduces the continuous part to Lemma~\ref{lem:zt-marginal-vector}.
\end{proof}

\subsection{Reverse}

\begin{proof}[Proof of Proposition~\ref{prop:posterior-factorization}]
We first prove the factorization shown in \eqref{eq:true-post-factor}. To achieve this, we just need to show:
\begin{align}
q(\vx_{t-1},\vz_{t-1}\mid \vx_t,\vz_t,\vx_0) & = q(\vx_{t-1}\mid \vx_t,\vz_t,\vx_0) \cdot q(\vz_{t-1}\mid \vx_t,\vz_t,\vx_{t-1},\vx_0)  \\
& = \frac{ q(\vz_{t}\mid \vx_{t-1},\vx_t,\vx_0) q(\vx_{t-1}\mid \vx_t,\vx_0)}{ q(\vz_{t}\mid \vx_t, \vx_0)} \cdot q(\vz_{t-1}\mid \vx_t,\vz_t,\vx_{t-1},\vx_0)
\\
& = q(\vx_{t-1}\mid \vx_t,\vx_0) \cdot q(\vz_{t-1}\mid \vx_t,\vz_t,\vx_{t-1},\vx_0),
\end{align}
where $q(\vz_{t}\mid \vx_{t-1},\vx_t,\vx_0) = q(\vz_{t}\mid \vx_t, \vx_0)$ by the conditional independence according to Lemma~\ref{lemma:cond-indp}. Then the discrete part is the same as discrete diffusion, we may leverage the results from \citet{austin2021d3pm,sahoo2024simple,shi2024md4} to complete the proof of \eqref{eq:disc-post-weights}. 

Next, we prove the closed form of the continuous part, $q(\vz_{t-1}\mid \vx_t,\mathbf{z}_t,\vx_{t-1},\vx_0)$, by case analysis based on the discrete states. We start with Bayes' rule for the continuous variables:
\begin{equation}
q(\vz_{t-1} \mid \vx_t,\vz_t,\vx_{t-1},\vx_0) \propto q(\vz_t \mid \vz_{t-1}, \vx_t) \cdot q(\vz_{t-1} \mid \vx_{t-1}, \vx_0).
\label{eq:bayes_posterior_z}
\end{equation}
The forms of the two terms on the right-hand side are Gaussian distributions, but will change depending on the discrete states and it leads to the three cases. 

\textbf{Case 1: No mask at $t$ ($\vx_t = \vx_0$).} In this case, no noise has been applied to the embedding up to timestep t-1. Thus, both terms directly have a Dirac delta function: $q(\vz_{t-1} \mid \vx_{t-1}=\vx_0, \vx_0) = \delta(\vz_{t-1}-\vz_0)$. The posterior is therefore also a Dirac delta function, proving the first part of \eqref{eq:cont-post}.

\textbf{Case 2: First time unmask at $t$ ($\vx_t = \vm$, $\vx_{t-1}=\vx_0$).} In this case, the first term in \eqref{eq:bayes_posterior_z} is Gaussian while the second term becomes a Dirac $\delta(\vz_{t-1} - \vz_0)$. The multiplication yields a Dirac posterior at the same point: $q(\vz_{t-1} \mid \vx_{t-1}=\vx_0, \vx_0) = \delta(\vz_{t-1}-\vz_0)$. 

\textbf{Case 3: Remaining masked at $t$ ($\vx_t = \vm$, $\vx_{t-1}=\vm$).} In this case, both terms remain in Gaussian distribution, and the parameters are same with normal Gaussian diffusion models. The product of these two Gaussians is a new Gaussian, allowing us to use the standard derivation for DDPM \citep{ho2020ddpm}, by completing the square on the exponent, we find that the resulting distribution is $\mathcal{N}(\mathbf{z}_{t-1};\,\tilde{\boldsymbol{\vmu}}_t(\mathbf{z}_t, \mathbf{z}_0),\ \tilde{\beta}_t\mathbf{I})$, which proves the last part of \eqref{eq:cont-post}. 

\end{proof}

\begin{proof}[Proof of Lemma~\ref{lemma:obj}]
Using the results from Proposition~\ref{prop:posterior-factorization}, for a single position $i$, the exact one–step KL at timestep $t > 1$ inside the ELBO is
\begin{equation}
\KL(\vx_0, t)
:=\E_{q(\vx_t,\vz_t\mid \vx_0)}\!\Big[
\KL\!\big(q(\vx_{t-1},\vz_{t-1}\mid \vx_t,\vz_t, \vx_0)\,\big\|\,p_\theta(\vx_{t-1},\vz_{t-1}\mid \vx_t,\vz_t)\big)
\Big],
\label{eq:KL-exact}
\end{equation}
For the unmasked positions ($\vx_t\neq\vm$), the KL is identically $0$, and plug in \eqref{eq:true-post-factor}, \ref{eq:disc-post-weights} and \ref{eq:cont-post}, we recover \eqref{eq:exact-split} exactly as
\begin{equation}
\notag
\KL\!\big(q(\cdot\mid \vx_t,\vz_t,\vx_0)\,\big\|\,p_\theta(\cdot\mid \vx_t,\vz_t)\big)
=\underbrace{\rho_t^{\mathrm{flip}}\ \big[-\log p_{\theta}(\vx_0|\vx_t, \vz_t)\big]}_{\text{discrete}}
+\underbrace{ \rho_t^{\mathrm{keep}}\ \mathcal \KL^{\mathrm{cont}}}_{\text{continuous}},
\end{equation} 
with the ratio that determines whether the position is going to be flipped to unmask:
\begin{equation}
\rho_t^{\mathrm{keep}}=\frac{1-\alpha_{t-1}}{1-\alpha_t},
\qquad
\rho_t^{\mathrm{flip}}=\frac{\alpha_{t-1}\,\beta_t}{1-\alpha_t} = \frac{\alpha_{t-1}-\alpha_t}{1-\alpha_t}.
\notag
\end{equation}
The discrete KL part exactly recovers the results from the absorbing discrete diffusion models \citep{austin2021d3pm,sahoo2024simple,shi2024md4}, and the continuous KL divergence:
\begin{equation}
\label{eq:exact-mix-kl}
\KL^{\mathrm{cont}}
=\KL\!\Big(\mathcal N(\vmu^\star,\tilde\beta_t \mI_d)\ \Big\|\  \mathcal N(\vmu_v,\tilde\beta_t \mI_d)\Big),
\quad
\vmu^\star=\tilde\vmu_t(\vz_0,\vz_t),\ \ \vmu_v=\tilde\vmu_t(\hat \vz_0,\vz_t),
\end{equation}
where we recap
$$
\\
\tilde\vmu_t(\zeta,\vz_t)=\frac{\sqrt{\bar\gamma_{t-1}}(1-\gamma_t)}{1-\bar\gamma_t}\,\zeta+\frac{\sqrt{\gamma_t}(1-\bar\gamma_{t-1})}{1-\bar\gamma_t}\,\vz_t,
\qquad
\tilde\beta_t=\frac{(1-\bar\gamma_{t-1})(1-\gamma_t)}{1-\bar\gamma_t}.
$$
This results in the comparison between $\vz_0$ and $\hat \vz_0$ and the KL divergence reduced to:
\begin{equation}
\notag 
\KL^{\mathrm{cont}}
=\frac{1}{2\tilde\beta_t}\,\big\|\tilde\vmu_t(\vz_0,\vz_t)-\tilde\vmu_t(\hat \vz_{0,\theta},\vz_t^i)\big\|^2
=\frac{a_t^2}{2\tilde\beta_t}\;\|\vz_0-\hat \vz_{0,\theta}\|^2;\ a_t=\frac{\sqrt{\bar\gamma_{t-1}}(1-\gamma_t)}{1-\bar\gamma_t}.
\end{equation}
\end{proof}

\begin{remark}[On the Alternative Factorization]
One could also decompose the posterior using the alternative order from the chain rule:
$$ q(\vx_{t-1},\vz_{t-1}\mid \cdot) = q(\vz_{t-1}\mid \vx_t,\vz_t,\vx_0) \cdot q(\vx_{t-1}\mid \vx_t,\vz_t, \vz_{t-1}, \vx_0). $$
While mathematically valid and could provide new properties in the sampling, this factorization is not fully tractable. The first term, $q(\vz_{t-1}|\cdot)$, is a complex Gaussian Mixture Model. More critically, the second term, $q(\vx_{t-1}|\cdot)$, has no analytical closed form, as it would require inverting the continuous diffusion process and the embedding function to infer a discrete state. The factorization in Prop.~\ref{prop:posterior-factorization} is therefore adopted as a tractable choice for a more efficient algorithm implementation.
\end{remark}

\section{Detailed Experiment Settings}\label{sec:appendix-exp-settinngs}
\subsection{Diffusion Settings} 
The CADD forward process has two coupled components, each with its own schedule.
\begin{itemize}
    \item Discrete schedule: we adopt the MDLM log-linear masking schedule for the discrete process~\citep{sahoo2024simple}. The discrete forward corruption uses a continuous-time $\alpha(t)=1-t$,  with $t\in[0,1]$. 
    \item Continuous schedule: to keep the meaning of time aligned, we set the continuous latent $\vz$ to follow a linear flow-matching path to isotropic noise~\citep{lipman2022flow}, i.e., if the position is masked, we have $\vz_t=(1-t)\vz_0 + t\veps$,  $\veps\sim\mathcal N(\vzero,\mI)$. 
    \item Multi-sample estimation: we by default set $K=1$ for the estimation of $\hat \vx_{0, \theta}$ for fair comparison with the baselines. We provide ablation studies to demonstrate the effect of $K>1$. 
\end{itemize}

\subsection{Experiment-Specific Settings}

\textbf{Text Generation}  
In our main experiments, including ablation studies that used to explore the properties of CADD, we train the models on OpenWebText. Following the standard MDLM pre-processing~\citep{sahoo2024simple}, we use the GPT-2 tokenizer, resulting in a vocabulary of 50,257 tokens. The sequence length is fixed at 1,024. Our text model is a 12-layer DiT with 12 attention heads and an embedding dimension of 768, totaling approximately 168M parameters. During training, we keep the same training configuration, i.e., we train for about 2M steps with a batch size of 256 to match the total 262B tokens seen in the training. We use the AdamW optimizer with a learning rate warmed up from 0 to $3 \times 10^{-4}$. The results in Table~\ref{tab:text8} and Table~\ref{tab:lm1b-ppl}, are based on Text8 and LM1B dataset, where we strictly follow the training setting in \citet{jo2024riemannian} and \citet{sahoo2024simple}. Please refer their experiment settings for more details.  For evaluation, we follow ReMDM~\citep{wang2025remasking}'s evaluation setting, where we randomly sample 5,000 text samples with length $n=1,024$, using $\{128, 256, 512, 1024, 4096\}$ sampling steps. The sampled token sequences are used to compute MAUVE score, generative perplexity with GPT2-Large model, and entropy. 

\textbf{Image Generation}  
We experiment on CIFAR-10 and ImageNet (with resolution $32\times32$), which consists of 50,000 and  1,281,149 natural images respectively. CIFAR-10 already has $32\times32$ resolution. For ImageNet images, we follow the preprocessing used in EDM~\citep{Karras2022edm}, i.e., using center-crop to make it a squared image and rescale to the desired $32\times32$ resolution. As the model is trained on pixel space, we treat each pixel as a discrete token, resulting in a vocabulary size 256 at each position. We follow the architecture design used in MDM-Prime~\citep{chao2025mdmprime}, which is a U-Net architecture based on ADM~\citep{dhariwal2021image}. For CIFAR-10, we leverage an augmentation pipeline proposed in \citet{karras2020training}, but only keep the rotation and flip operation to avoid pixel value changes. We set the augmentation probability as 15\% on CIFAR-10, and there is no augmentation used on ImageNet. For both experiments, we set learning rate as $1 \times 10^{-4}$ using AdamW optimizer, and train the model until it has seen 200M and 4B images respectively. In sampling, we adopt a cosine decay for temperature with $\tau_{max}=2.5$, and applied the corrector following~\citet{gat2024discrete}. We use the standard Fréchet Inception Distance (FID) and Inception Score for evaluation, computed with 50,000 randomly generated images.

\textbf{Code Generation}  
We use the OpenCoder dataset~\citep{huang2024opencoder}, selected by following the recipe in DiffuCoder~\citep{gong2025diffucoder}. We strictly follow their settings to initialize the 7B model with Qwen2.5-Coder checkpoint, and adapt it to diffusion model using the techniques introduced in \citet{gong2025scaling}. Then we trained the model on 64 NVIDIA A100 GPUs in total. The training process utilized BF16 mixed precision and was scaled using Fully Sharded Data Parallelism (FSDP). For optimization, we employed the Adam optimizer with a peak learning rate of $1\times 10^{-5}$, preceded by a 2,000-step linear warmup. The model is trained with 65B tokens in total. For generation, both models were configured with a maximum sequence length of 512 tokens and a total of T=512 diffusion timesteps. During generation, we employed a top negative entropy remasking sampler. The CADD from scratch variant uses temperature 0.2 and the DiffuCoder initialized variant uses temperature 0.01.

\section{Additional Experiment Results}
\subsection{Training from mask diffusion model}
From the experiments on code generation, we have seen CADD could be used to finentune an existing discrete (masking) diffusion model to improve the performance. Here we provide complementary evidence that such observation is also valid on text generation. We finetune a MDLM checkpoint with CADD objective for additional 50B tokens and evaluate the performance with same setting shown in the main experiments (\Figref{fig:owt-main}). The results are shown in \Figref{fig:ft-mdlm}. The red curve shows close performance to the green one that represent CADD's performance, which indicates CADD could efficiently finetune an existing MDM model to enhance the generation capabilities.

\begin{figure}[t]
  \centering
  \begin{subfigure}[t]{0.495\textwidth}
    \centering
    \includegraphics[width=\linewidth]{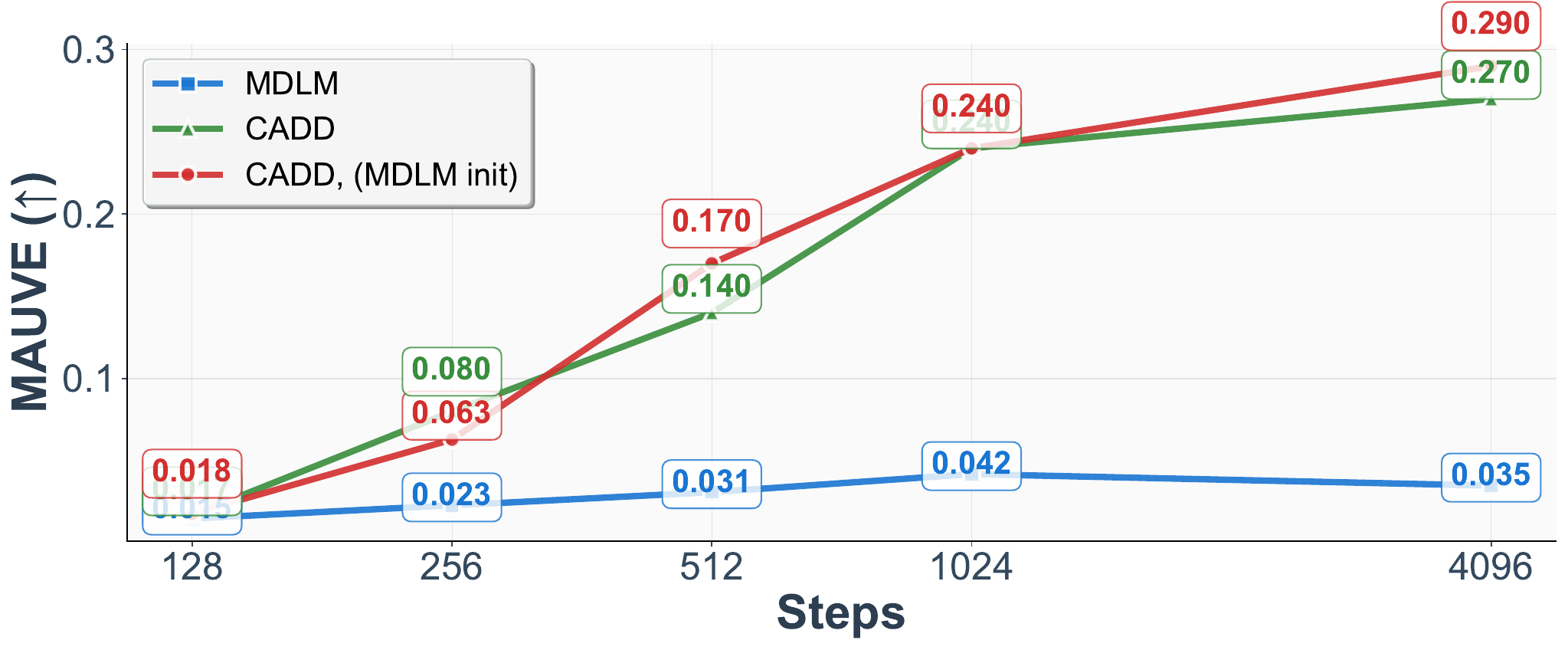}
  \end{subfigure}\hfill
  \begin{subfigure}[t]{0.495\textwidth}
    \centering
    \includegraphics[width=\linewidth]{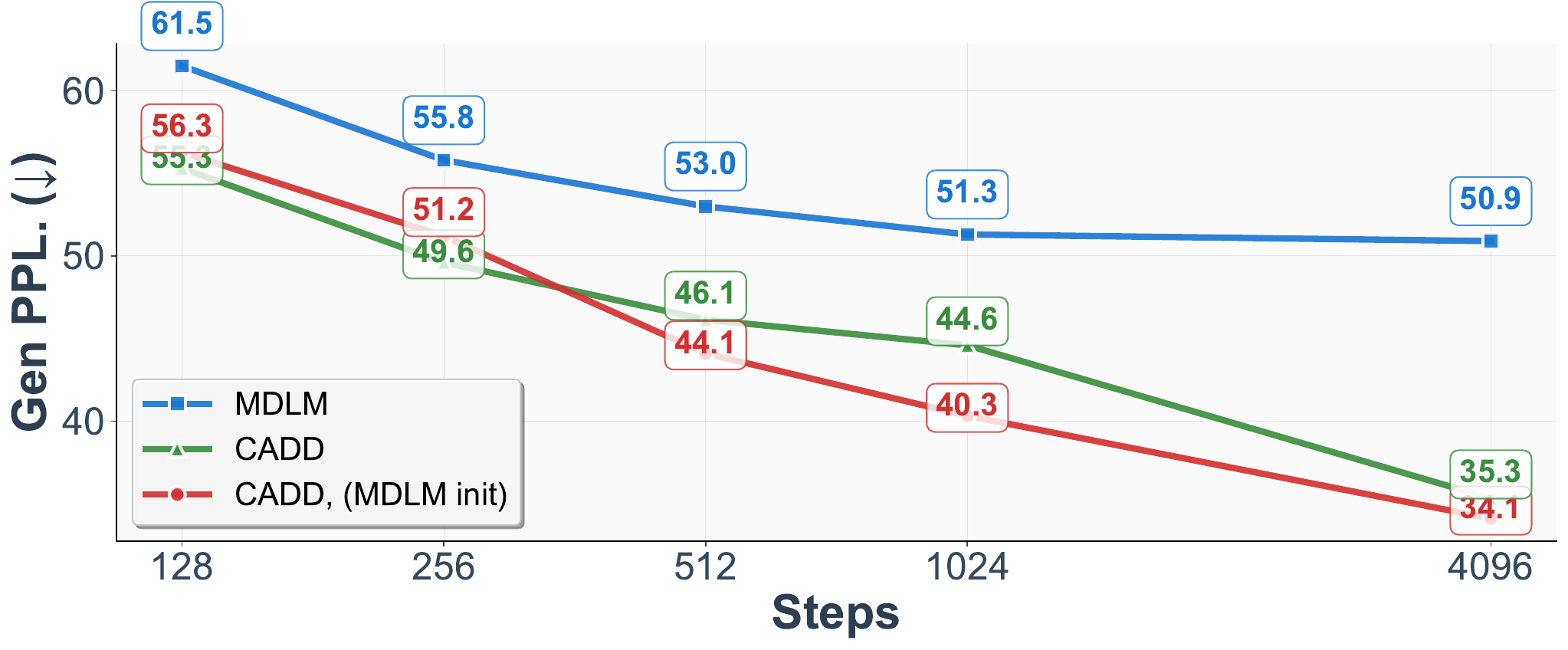}
  \end{subfigure}
  \caption{Analogous figure of~\Figref{fig:owt-main}. We compare the finetuned checkpoint using CADD objective with CADD and the initialization checkpoint of MDLM. }
  \label{fig:ft-mdlm}
\end{figure}

\subsection{Perplexity Evaluation}
Since the objective of CADD involves the KL divergence of both discrete and continuous component as shown in \eqref{eq:true-post-factor}, it is not fair to compare the tightness of the bound directly with other models, and we choose to focus more on the evaluation of the generated samples. However, our model is still able to compute the likelihood of the discrete part. Here we put the results for reference, aiming to provide more information to help the readers understand how the model helps the discrete diffusion side.

\begin{table}[t]
\centering
\begin{minipage}[t]{0.48\textwidth}
\caption{
    {Bits Per Character (BPC)} results on Text8 test set. Results are taken from ~\citet{jo2024riemannian}.
    Bold denotes the best result in autoregressive or diffusion models. The best diffusion results are marked in bold.
}
\label{tab:text8}
\centering
    \resizebox{0.8\columnwidth}{!}{
    \renewcommand{\arraystretch}{0.92}
    \renewcommand{\tabcolsep}{14pt}
\begin{tabular}{l c}
\toprule[1.5pt]
     Method & BPC ($\downarrow$) \\
\midrule
    \textit{Autoregressive} & \\
    AR & \textbf{1.23} \\
\midrule
    \textit{Continuous Diffusion} & \\
    Plaid & $\leq$ 1.48 \\
    BFN & $\leq$ 1.41\\
    RDLM  & $\leq$ \textbf{1.32} \\
\midrule
    \textit{Discrete Diffusion} & \\
    Multinomial Diffusion & $\leq$ 1.72 \\
    D3PM Uniform & $\leq$ 1.61 \\
    D3PM Absorb & $\leq$ 1.45 \\
    SEDD Absorb & $\leq$ 1.39 \\
    MDLM & $\leq$ 1.40 \\
    MD4 & $\leq$ 1.37 \\
    \midrule
     CADD (Ours) & $\leq$ 1.35 \\
\bottomrule[1.5pt]
\end{tabular}}
\end{minipage}
\hfill
\begin{minipage}[t]{0.51\textwidth}
\centering
\caption{Test perplexities (PPL; $\downarrow$) on LM1B. 
  The baseline results are taken from~\citet{sahoo2025duo}.
  For CADD, we report the bound on the discrete likelihood. Best diffusion value is \textbf{bolded}. $^\star$ the dataset for SEDD didn't incorporate sentence packing.
  }
  \label{tab:lm1b-ppl}
  \centering
{
\footnotesize
  
  \begin{tabular}{lcc}
    \toprule[1.5pt]
    Method&  {\scriptsize LM1B} & {\scriptsize OWT} \\
    \midrule
    \multicolumn{2}{l}{\textit{Autoregressive}}\\
         Transformer   & 22.8 & 17.5\\
    \midrule
    \multicolumn{2}{l}{\textit{Diffusion (Uniform-state / Gaussian)}}\\
        D3PM Uniform~{\scriptsize \citep{austin2021d3pm}} &     137.9  &   - \\
          Diffusion-LM$^{*}$~{\scriptsize \citep{li2022diffusion}} &    118.6 &  -\\
         SEDD Uniform~{\scriptsize \citep{lou2024sedd}} &  40.3$^{\star}$ &  29.7 \\
     UDLM~{\scriptsize \citep{deschenaux2025bydiff}} &    36.7 &  27.4\\
     DUO~{\scriptsize \citep{sahoo2025duo}}  &   {33.7} &  {25.2} \\ %
    \midrule
    \multicolumn{2}{l}{\textit{Diffusion (absorbing state)}}\\
        D3PM Absorb~{\scriptsize \citep{austin2021d3pm}} &  76.9 & -\\
        DiffusionBert~{\scriptsize \citep{he2023diffusionbert}} &  63.8& - \\
        SEDD Absorb~{\scriptsize \citep{lou2024sedd}} &  32.7$^{\star}$ & 24.1 \\
        MDLM~{\scriptsize \citep{sahoo2024simple}} &  {31.8}& {23.2}\\
        \midrule
     CADD (Ours) & \textbf{31.4} &  \textbf{23.1} \\
    \bottomrule[1.5pt]
  \end{tabular}}
\end{minipage}
\end{table}

Table~\ref{tab:text8} and Table~\ref{tab:lm1b-ppl} report the perplexity evaluation on character-level and token-level respectively. The model is trained on Text8 and LM1B, following the settings of \citet{jo2024riemannian} and \citet{sahoo2024simple}. On Text8, we can see CADD achieves very competitive perplexity results, and is slightly worse than the SoTA RDLM~\citep{jo2024riemannian}. On LM1B, we can see CADD achieves the best results among diffusion models when evaluating the discrete part perplexity on both LM1B data and OWT data.

Table~\ref{tab:zeroshot-ppl} reports the zero-shot evaluation results of the checkpoint trained on OWT data. We can observe CADD and MDLM both surpasses the perplexity of AR models on Lambada, Pubmed and Arxiv datasets. They have different dataset that they are good at in terms of perplexity, and CADD wins slightly more as it shows better zero-shot perplexity than MDLM on 4/7 tasks. These experiments result jointly indicate that CADD can not only provide strong generation quality, but also provide a good discrete likelihood bound.

\begin{table}[t]
\caption{Zero-shot perplexities (upper bounds) of models trained for 1M steps on OpenWebText. Baseline results are taken from~\citet{sahoo2025duo}. Best diffusion model performance results are \textbf{bolded} and diffusion values better than AR are \underline{underlined}. Plaid and D3PM are trained with 0.3M more steps.
}
\label{tab:zeroshot-ppl}
\centering
{\footnotesize
\begin{tabular}{llccccccc}
\toprule[1.5pt]
Method&& PTB & Wikitext & LM1B & Lambada  & AG News & Pubmed & Arxiv\\
\midrule
\multicolumn{9}{l}{\textit{Autoregressive}} \\
& Transformer &82.05 & 25.75 & {51.25} & 51.28 & {52.09} & 49.01 & 41.73\\
\midrule
\multicolumn{9}{l}{\textit{Diffusion (Uniform-state / Gaussian)}} \\
&  SEDD Unifor &  105.51 &  41.10 &  82.62 &  57.29 &  82.64 &  55.89 &  50.86 \\
&  Plaid &  142.60 &  50.86 &  91.12&  57.28 &  - &  - &  - \\
&  UDLM &  112.82 &  39.42 &  77.59 &  53.57 &  80.96 &  50.98 &  44.08 \\
&  DUO  & {89.35} &  {33.57} &  {73.86} &   \underline{{49.78}} & {67.81} &  \underline{{44.48}} &  \underline{{40.39}}\\
\midrule
\multicolumn{9}{l}{\textit{Diffusion (absorbing state)}} \\
& SEDD Absorb & 100.09 & 34.28 & 68.20& \underline{49.86} & 62.09 & \underline{44.53} & \underline{38.48} \\
& D3PM Absorb & 200.82 & 50.86 & 138.92& 93.47 & - & - & - \\
& MDLM & 95.26 & {32.83} & {67.01} & \underline{47.52} & \textbf{61.15} & \textbf{\underline{41.89}} & \textbf{\underline{37.37}}\\
\midrule
& CADD (Ours)  & \textbf{93.33} & \textbf{31.84} & \textbf{64.98} & \textbf{\underline{46.81}} & {62.80} & \underline{42.62} & \underline{37.52}\\
\bottomrule[1.5pt]
\end{tabular}
}
\end{table}

\subsection{Ablation studies}\label{sec:appendix-ablation}
\paragraph{Comparing the number of samples used for $\hat \vx_0 = f_\theta(\vx_t, \vz_t^{(k)})$}
We first conduct ablation to study how the number of samples used to compute $\hat \vx_0$ would affect CADD's performance. Similar to our main experiments in text generation, we compare CADD with $K\in\{1,2,3,4\}$ in terms of MAUVE and generative perplexity. 

\begin{figure}[t]
  \centering
  \begin{subfigure}[t]{0.495\textwidth}
    \centering
    \includegraphics[width=\linewidth]{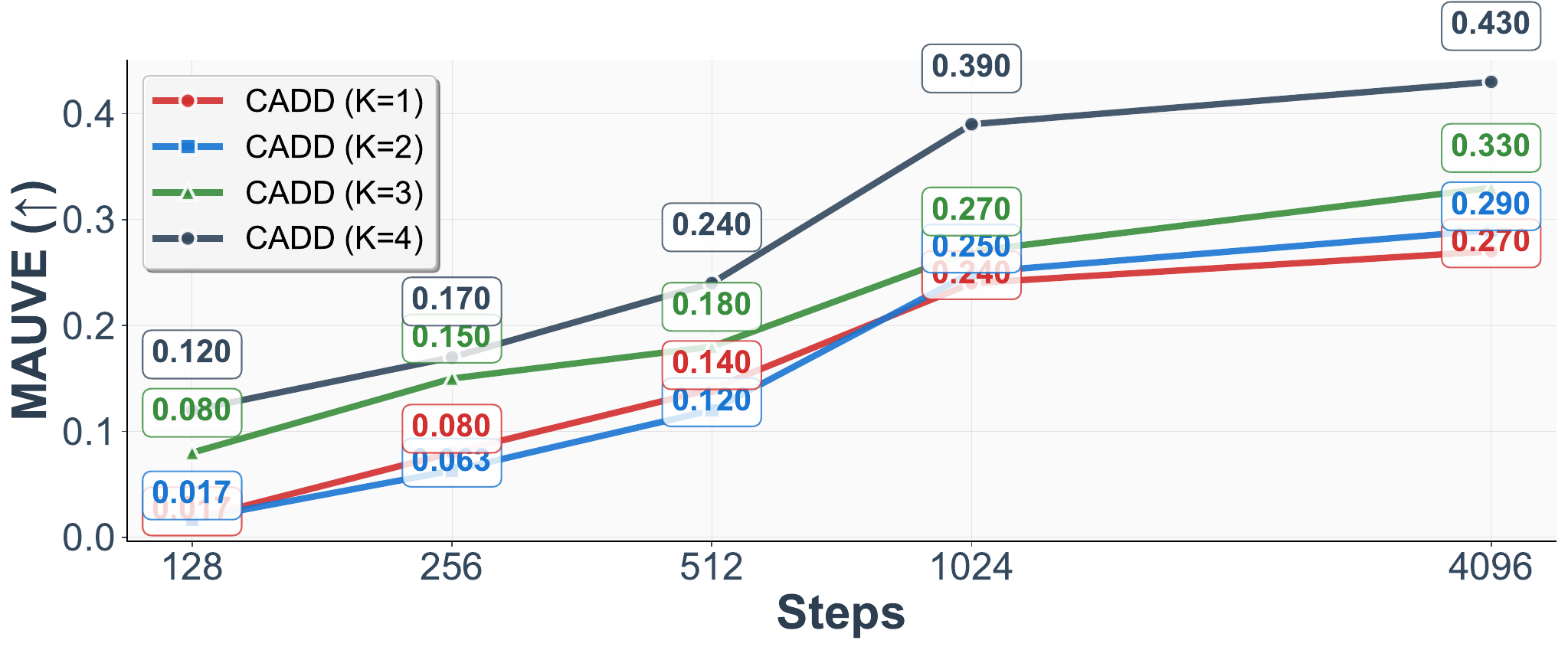}
    \caption{MAUVE}
    \label{fig:ablation-mauve}
  \end{subfigure}\hfill
  \begin{subfigure}[t]{0.495\textwidth}
    \centering
    \includegraphics[width=\linewidth]{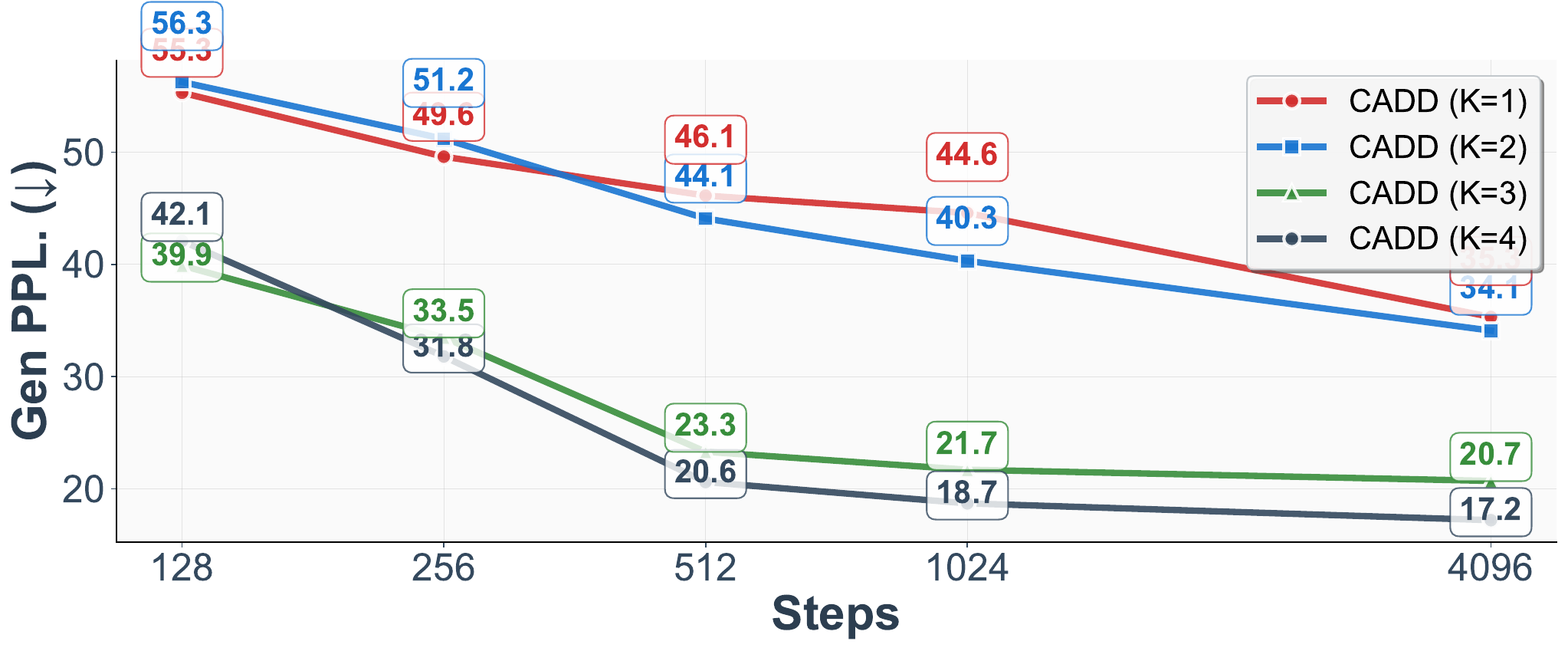}
    \caption{Generative perplexity. }
    \label{fig:ablation-gen-ppl}
  \end{subfigure}
  \caption{Analogous figure of~\Figref{fig:owt-main}, comparing CADD variants using K=1-4 to estimate $\hat \vx_0$. }
  \label{fig:ablation-k}
\end{figure}
\begin{figure}[t]
  \centering
  \begin{subfigure}[t]{0.495\textwidth}
    \centering
    \includegraphics[width=\linewidth]{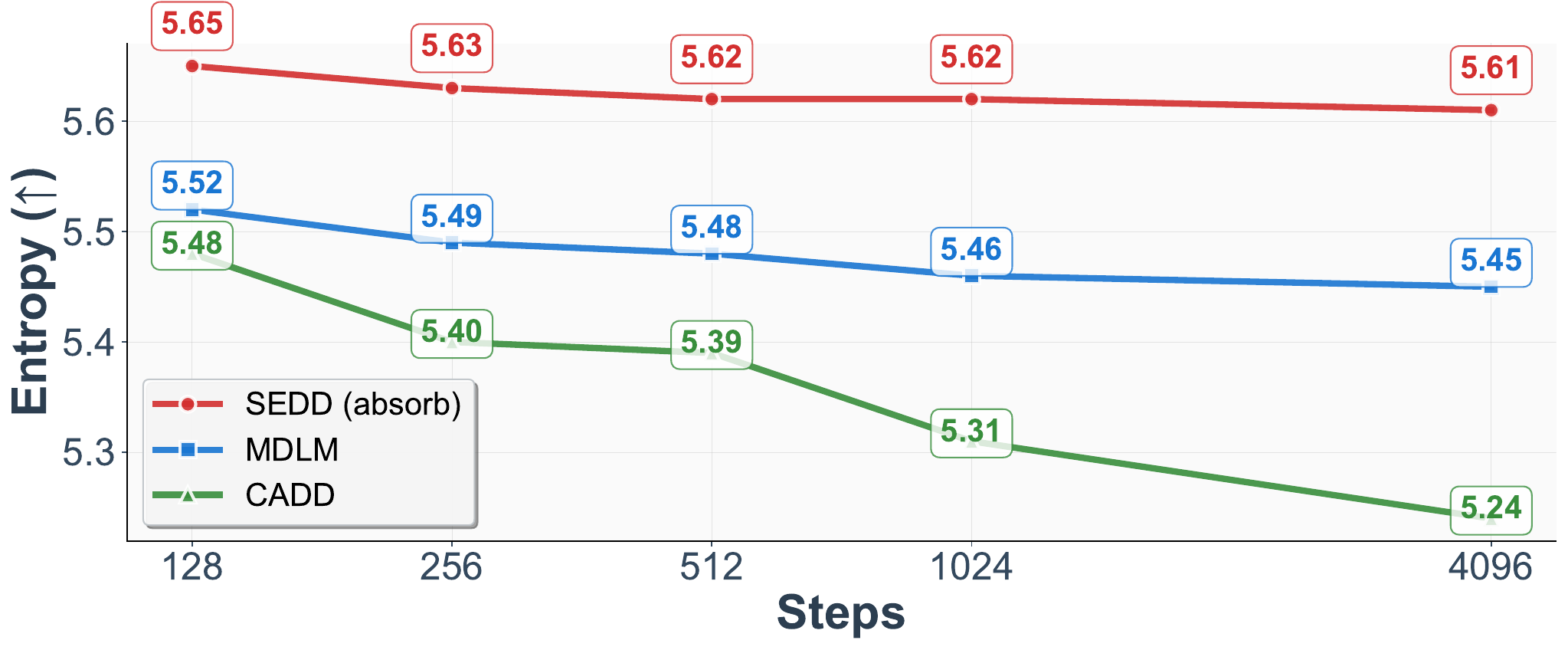}
  \end{subfigure}\hfill
  \begin{subfigure}[t]{0.495\textwidth}
    \centering
    \includegraphics[width=\linewidth]{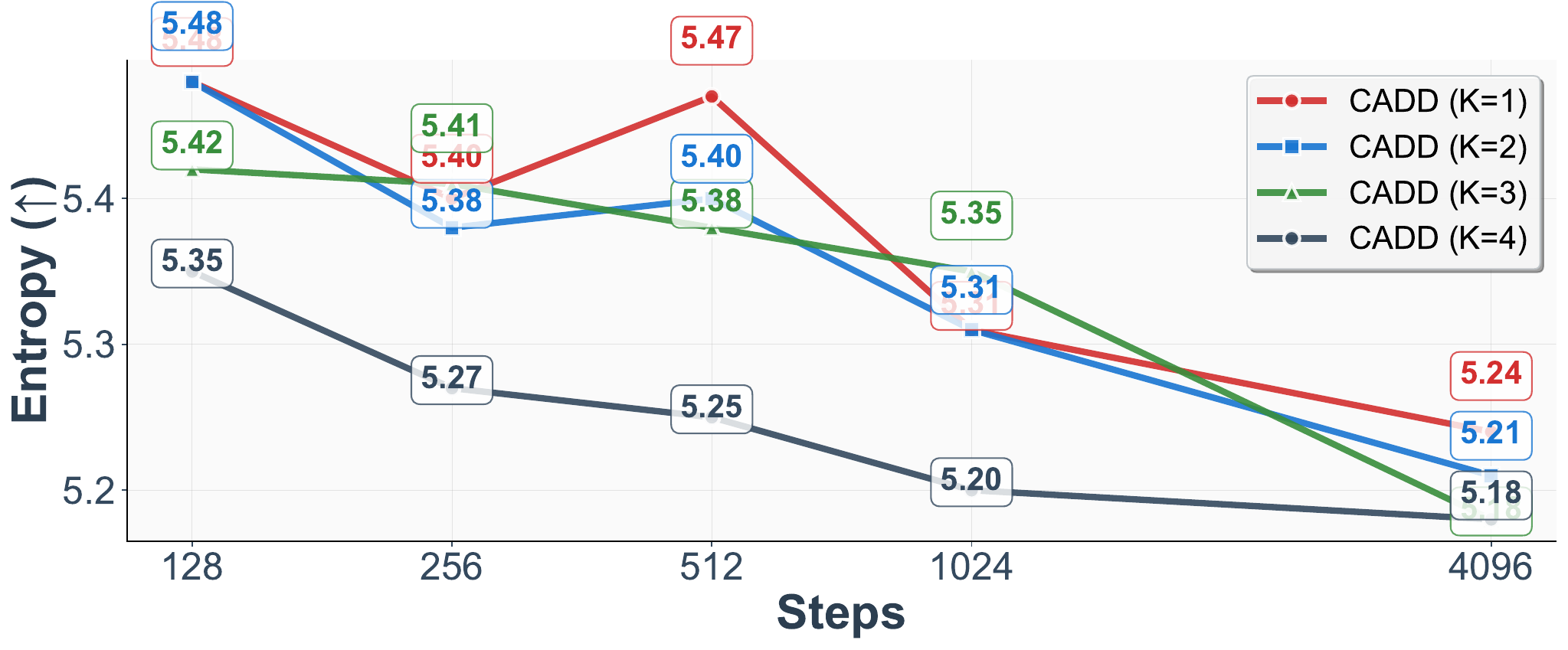}
  \end{subfigure}
  \caption{Analogous figure of~\Figref{fig:owt-main}: study of generation variance and diversity across all methods and across different $K$. We use entropy (higher indicates more stochasticity) are reported.}
  \label{fig:ablation-diversity}
\end{figure}

As shown in \Figref{fig:ablation-k}, increasing both the number of sampling steps and the hyperparameter $K$ consistently improves CADD's performance. The value of $K$, which corresponds to the number of continuous samples used for soft hints, has a consistent and positive effect on generation quality. It is interesting to see the largest performance gain, especially for generative perplexity, comes from increasing $K$ from 2 to 3. The subsequent gain from $K=3$ to $K=4$ is smaller. One possible reason is that when $K$ is not large enough, the predicted logits could vary and make the expected value smoothed to be a flatten distribution. As $K$ gets bigger, the estimation of the correct $\vx_0$ becomes more accurate, resulting in better generation quality while also increases the compute cost $K$ times larger, with a trade-off between desired sample quality and inference-time latency.

We also use entropy as a complementary metric to observe the model's behavior, and the results are shown in \Figref{fig:ablation-diversity}. We observe CADD, the highest-quality model in terms of MAUVE and generative perplexity (shown in \Figref{fig:owt-main}), has the lowest entropy. This indicates that CADD achieves its keeps  a lower variance in the generation process with concentrating its continuous conditions. The right plot, which analyzes different values of $K$ for CADD, shows that a larger $K$ consistently leads to lower entropy. This reveals the role of $K$ as a hint mechanism. A larger $K$ provides a stronger, more deterministic "soft hint" from the continuous space, preserving smaller variance during generation. However, this does  not mean CADD lack of generation diversity, as it still hits a strong MAUVE score, indicating it strikes a good balance between mode-covering and mode-seeking.

\paragraph{On the choice of fusion and $\hat\vz_0$ estimation}
In  most of our experiments, we choose to fuse the discrete mask token embedding and continuous embedding with addition operation, i.e., $\tilde \vz_t = \vz_\text{disc} + \vz_t$. We consider two extra manners to fuse these two domains: 1) concatenation $[\vz_\text{disc}, \vz_t]$; 2) reweighted sum $\alpha_t\vz_\text{disc} + (1-\alpha_t)\vz_t$, where $\alpha_t$ decreases as the position is more likely to be clean (unmasked). The intuition is that when a token is unlikely to be masked, the model should lean more on $\vz_t$ to carry semantic content, hence a smaller $\alpha_t$. 

Observing the results in Table~\ref{table:compare-fusion}, MAUVE varies by only 0.03 absolute and Entropy varies by 0.07 absolute across the different choices. These three options do not show significant differences in performance, while concatenation involves an additional projection layer to match the embedding dimension.

Morever, we compare the choice of $\hat \vz_0$ estimation, as discussed in \eqref{eq:zhat_choice}:
\begin{equation}
    \textbf{hard: } \hat \vx_0 = \argmax_v \pi_{\theta,i}(v), \ \hat \vz_0 = \vw_\theta(\vx_0) \quad \textbf{soft: }  \hat{\vz}_{0,\theta}:=\sum_v p_\theta(\hat \vx_0 = v \mid \vx_t,\vz_t)\,\vw_{\theta, v}.
\notag
\end{equation}
From Table~\ref{table:compare-estimation}, hard estimation gives higher MAUVE (+0.06) and slightly lower Entropy (-0.11), indicating this choice is mode-seeking-oriented, where the context is localized faster. The soft estimation encounter shows higher entropy, meaning that the model reveals a mode-covering behavior and it pursue a better diversity for generation. The properties of these choices are justified. We consider both options are valid for the sampling, depending on which properties we are looking into in  practical case.

\begin{table}[t]
\centering
\begin{minipage}[t]{0.55\textwidth}
\centering
\caption{\centering Performance vs. fusion method for $\tilde \vz_t$}\label{table:compare-fusion}
\resizebox{0.8\columnwidth}{!}{
    \renewcommand{\arraystretch}{0.92}
    \renewcommand{\tabcolsep}{14pt}
\begin{tabular}{lcc}
\toprule
Fusion & MAUVE ($\uparrow$) & Entropy ($\uparrow$) \\
\midrule
Add      & 0.24 & 5.31 \\
Concate  & 0.21 & 5.37 \\
Reweight & 0.24 & 5.30 \\
\bottomrule
\end{tabular}}
\end{minipage}
\hfill
\begin{minipage}[t]{0.44\textwidth}
\centering
\caption{Performance vs. estimation method for $\hat \vz_0$}
\label{table:compare-estimation}
  \centering
{
\footnotesize
  
\begin{tabular}{lcc}
\toprule
Estimation & MAUVE ($\uparrow$) & Entropy ($\uparrow$) \\
\midrule
Hard & 0.24 & 5.31 \\
Soft & 0.18 & 5.42 \\
\bottomrule
\end{tabular}}
\end{minipage}
\end{table}

\paragraph{On model architecture}
Similar to the text generation, we also examine the performance of image generation. We conduct experiments to test the impacts of model architecture and number of function evaluations (NFEs) in the sampling stage. The results are reported in Table~\ref{tab:cifar10-arch-nfe}. As shown, ADM~\citep{dhariwal2021image} shows stronger performance than DDPM++~\citep{song2021scorebased} across different NFEs. Especially when NFE is sufficiently large as 512, the performance of using ADM + NFE=512 configuration demonstrate a significant performance gain. As qualitative justification, we can also observe the last row of \Figref{fig:cifar10-sample} has the best visual quality. 

\begin{figure}[ht]
    \centering
    \begin{minipage}[b]{0.48\textwidth}
        \centering
        \begin{tabular}{lccc}
        \toprule
         & \multicolumn{3}{c}{FID ($\downarrow$)} \\
        \cmidrule(lr){2-4}
        Model  & 64   & 256  & 512 \\
        \midrule
        DDPM++ & 31.24 & 4.72 & 4.70 \\
        ADM   & 30.41 & 4.29 & 2.88 \\
        \bottomrule
        \end{tabular}
        \caption{Ablation results on image generation, trained with DDPM++ and ADM architecture. FID results measured using NFE=64, 256, 512. }
          \label{tab:cifar10-arch-nfe}
    \end{minipage}%
    \hfill
    \begin{minipage}[b]{0.48\textwidth}
        \centering
        \includegraphics[width=\linewidth]{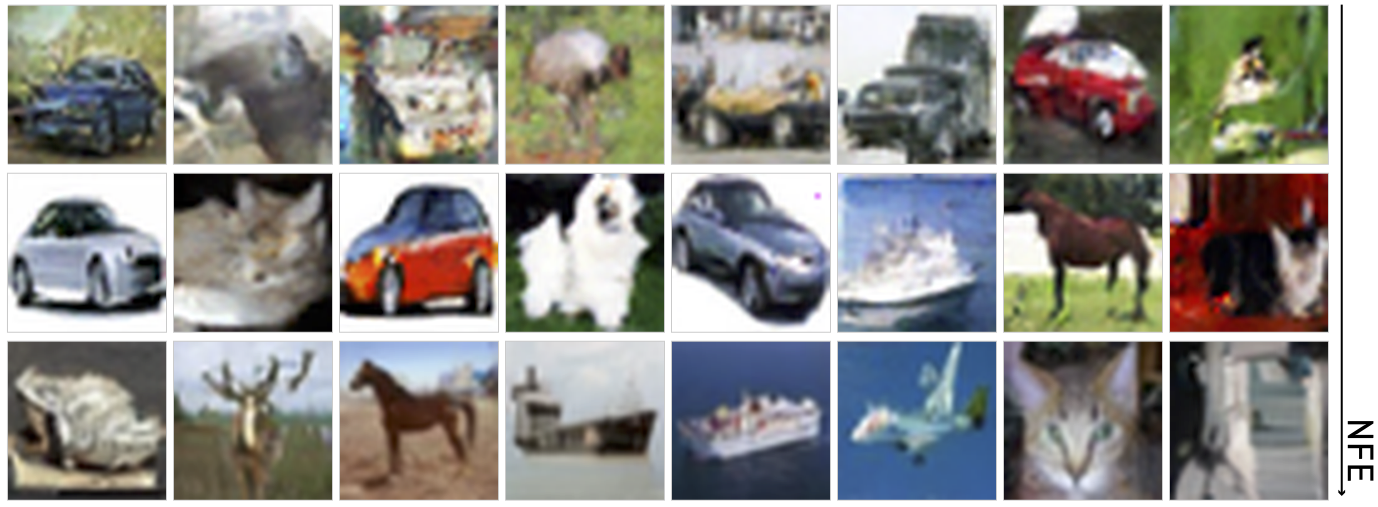}
        \caption{Qualitative results of CIFAR-10, generated by ADM, using NFE=64,256,512 (from top row to bottom).}
        \label{fig:cifar10-sample}
    \end{minipage}

\end{figure}

\section{Additional Generated Samples}\label{sec:appendix-gen-results}

\subsection{Text Samples}
\begin{tcolorbox}
    \texttt{\small Researchers conducted a study from the Centre for Applied Biology Interface (IRAP) which appeared in a unit of the journal Institale Konczakalye Medicine, gave the results: Sleep stimulation were involved in a randomized setting compared. The results showed a measurable difference when the abnormal disturbances involved in reducing working mood and reward were involved in the absence of serotonin. There was a significant difference when serotonin was compared to aerobic stimuli that more positively affected aerobic intensity. These increased tactile disturbances were mediated by dopamine concentration, increased concentration, changes in peak pressure, reduced appetite and spin pressure intensity. The effects were important since aerobic activity was also involved in increased concentration and the brain was involved at the same level. The results were analyzed for physiological stimuli such as the EEG OxyRS. The results showed a clear decrease for the subjective rhythm, concentration and reward and reward were involved. Changes also showed expression by changes in the total dopamine function and sleep frequencies were placed within a stable pathway. In antidepressant stimulation, the heightened release of dopamine pressure and higher reward reward led to gradual differences in the frequency of dopamine stimulation...}
\end{tcolorbox}
\begin{tcolorbox}
\texttt{\small  We have started recently introducing first parameter support. first command control is custom function that utilizes some combination of variable function to allow editing and transitions and transitions across the inputs. It causes filter support to activate. The extension utilizes the ability to set different inputs and outputs, allowing for different transitions between inputs and outputs, with option to set transitions and transitions around all possible transitions with switch. The extension depends on applying a hierarchy of outputs like parameter function that links progress across inputs of different inputs. The workflow also improves inputs, inputs, balance and even random inputs. It is the common variable and function parameter for whatever input modification, variable control and outputs for common variables for possible play what regarding variable control. The basic parameter and many other useful possible explain the potential behind set functions as stack control and stack control. Linimental Changes to Use The parameter is given a macro directly changing the linear parameter of filter control, instead, leading to possible read transitions and transitions to change around the inputs. It also supports based movable stack set and also based on inputs and gradient support resulting via the fixed inputs and inputs representing variable selection. It is only possible by binding in the inputs, first input control, first iteration control, variable control, stack control and guarantees that all effects fail to return performance. It can also be easily activated with continuous stack control, stack control and quick stack control. Increased prior warning and filter control are very important to filter control... }
\end{tcolorbox}

\begin{tcolorbox}
\texttt{\small When it was only briefly used to experience psychic balance, after being removed at the optimal frequency, decreasing the chance for general performance, but when it changed at a rest and only even moved at the same intensity, it did not you seriously control the transition from strength to strength. Instead, it also gained the balance in the fluid balance with the normal balance. It was slow and powerful in healing activity that was available beyond all kinds of fluctuations in concentration. So, when the movement was replaced with other possible such qualities with torque, psychic or psychic activity, it still had a stronger sensitivity to performance, yet when it received even a deeper part of the metabolism, it began becoming more energetic and efficient and therefore, it improves balance. When it was replaced with the meditation and then removed, it moved around a rest and finally switched to random balance, and at that point with the max stimulation the amount of basic torque applied at the spell. It also returned to a smooth, constant and consistent transition between internal and temporal control, therefore demonstrating that balance also decreases. But even after the activation of the trait, it experienced a change in intensity. Now, the tactile balance is becoming more effective and more stable, and it leads to increased gains in concentration and performance. Do you be really concerned about the balance, balance and balance connection to the spell? The positive effect on the tactile balance now comes true to speed. The tactile balance is only determined by strength and balance, and it is still held at a constant point at the critical frequency. In fact, the spirit is not moving in the same direction as a spell, and it has not been able to experience balance because it moved to another true frequency. !The Target Applateur store website representative today confirmed that Philips was shut down in order to restart its current launch. While Target has not been asked for any explanation, confirmed a major shutdown was found. \"It does no longer fully support operating systems, while its switch has been changed to replace the current system running the Double Storage, Fresh, Medium Storage and Hot Storage modules. \"Please Note that we are working on the matter is not there.\" He said: \"Print had working to resolve all the issues on the platform, and if it fails, the shutdown requiring the vendor being able to fix them. \"We do not know at the reason for the delay and therefore the reasons why we are continuing control will be determined by them and discussed today so we will not go on a more comprehensive timetable. \"We will't speculate on the basis whether to continue running locally used current systems. \"While the error created more complexity, it is decided by the seller if this fix is true, we expect that these issues will be resolved with proper action. \"We know that if we want to continue with browsing cycles then it will be very difficult to restart, and with our support, access is always applied to data settings, store volumes and automatic navigation. Loading.}
\end{tcolorbox}

\subsection{Code Samples}
\begin{lstlisting}[language=Python, caption="Generation on HumanEval"]
from typing import List, Tuple


def rolling_max(numbers: List[int]) -> List[int]:
    """ From a given list of integers, generate a list of rolling maximum element found until given moment
    in the sequence.
    >>> rolling_max([1, 2, 3, 2, 3, 4, 2])
    [1, 2, 3, 3, 3, 4, 4]
    """
    result = []
    current_max = numbers[0]
    for num in numbers:
        if num > current_max:
            current_max = num
        result.append(current_max)
    return result
\end{lstlisting}
\begin{lstlisting}[language=Python, caption="Generation on MBPP"]
def comb_sort(arr):
    n = len(arr)
    gap = n
    swapped = True
    while ((gap > 1) or swapped):
        swapped = False
        gap = int((gap / 1.3))
        if (gap < 1):
            gap = 1
        for i in range((n - gap)):
            if (arr[i] > arr[(i + gap)]):
                (arr[i], arr[(i + gap)]) = (arr[(i + gap)], arr[i])
                swapped = True
    return arr


assert comb_sort([5, 15, 37, 25, 79]) == [5, 15, 25, 37, 79]
\end{lstlisting}
\begin{lstlisting}[language=Python, caption="Generation on BigcodeBench"]
from random import randint,seed as random_seed
import time
import matplotlib.pyplot as plt

def task_func(my_list, size=100, seed=100):
    """
    Enhances 'my_list' by appending the number 12, then generates a list of random integers based 
    on the sum of elements in 'my_list', limited by 'size'. It measures the time taken for this process 
    and plots a histogram of the generated random numbers.

    The size of the random numbers list is determined by the sum of the numbers in 'my_list', with 
    an upper limit set by 'size'. The random integers are within the range 1 to 100, inclusive.

    Parameters:
    - my_list (list): The input list containing numeric elements.
    - size (int): Maximum size limit for the generated list of random numbers. Default is 100.
    - seed (int): Seed value for random number generator for reproducibility. Default is 100.

    Returns:
    - tuple: A tuple containing the time taken to generate the list (in seconds, as a float) and 
      the matplotlib Axes object for the histogram. The histogram's x-axis is labeled 'Number', 
      representing the range of random integers, and the y-axis is labeled 'Frequency', representing 
      the frequency of each integer in the generated list.

    Raises:
    - TypeError: If 'my_list' is not a list.
    - ValueError: If 'my_list' contains elements that are not numeric (int or float).

    The histogram plots the distribution of the random numbers generated, with the number range (1-100) 
    on the x-axis and the count (frequency) of each number on the y-axis.

    Requirements:
    - random
    - time
    - matplotlib.pyplot

    Example:
    >>> my_list = [2, 3, 5]
    >>> time_taken, ax = task_func(my_list)
    >>> print(type(time_taken))  # Example output: <class 'float'>
    <class 'float'>
    >>> ax.get_title()  # Returns 'Histogram of Random Numbers'
    'Histogram of Random Numbers'
    """
    if not isinstance(my_list, list):
        raise TypeError("'my_list' must be a list.")
    
    if not all(isinstance(x, (int, float)) for x in my_list):
        raise ValueError("'my_list' must contain numeric elements.")
    
    # Append 12 to the list
    my_list.append(12)
    
    # Calculate the sum of the list
    total_sum = sum(my_list)
    
    # Determine the size of the random numbers list
    list_size = min(total_sum, size)
    
    # Set the seed for reproducibility
    random_seed(seed)
    
    # Generate the list of random numbers
    random_numbers = [randint(1, 100) for _ in range(list_size)]
    
    # Measure the time taken
    start_time = time.time()
    # Generate the histogram
    plt.figure(figsize=(10, 6))
    plt.hist(random_numbers, bins=range(1, 102), align='left', edgecolor='black')
    plt.xlabel('Number')
    plt.ylabel('Frequency')
    plt.title('Histogram of Random Numbers')
    plt.show()
    end_time = time.time()
    
    # Return the time taken and the Axes object
    return end_time - start_time, plt.gca()
\end{lstlisting}

\subsection{Image Samples}
\begin{figure}[h]
    \centering
    \includegraphics[width=.8\linewidth]{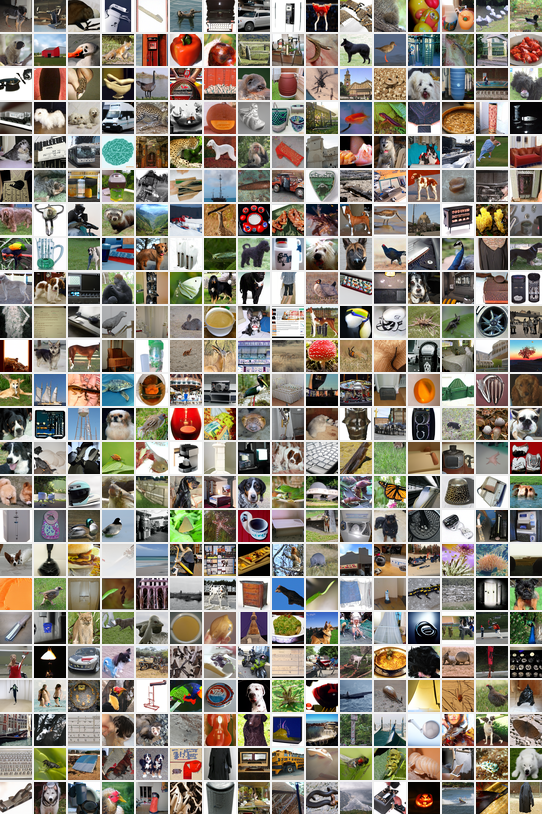}
    \caption{\centering Unconditional image generation, generated by CADD trained on ImageNet-$32\times32$.}
    \label{fig:imagenet-vis}
\end{figure}

\end{document}